\newcolumntype{L}[1]{>{\raggedright\arraybackslash}p{#1}}
\theoremstyle{plain}
\newtheorem{proposition}{Proposition}
\theoremstyle{definition}
\definecolor{taskgreen}{HTML}{00D7A1}
\definecolor{taskorange}{HTML}{FF8F00}
\definecolor{taskred}{HTML}{EF0037}
\newcommand{\gdot}{\textcolor{taskgreen}{$\bullet$}}
\newcommand{\mdot}{\textcolor{taskorange}{$\bullet$}}
\newcommand{\rdot}{\textcolor{taskred}{$\bullet$}}
\title{Foundation Model Forecasts: Form and Function}
\author{%
  Alvaro Perez-Diaz\thanks{Corresponding author: \protect\raisebox{-0.22em}{\includegraphics[height=0.9em]{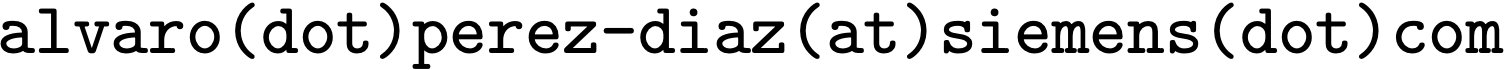}}} \\
  Senseye, Siemens Digital Industries \\
  \And
  James C. Loach \\
  Senseye, Siemens Digital Industries \\
  \AND
  Danielle E. Toutoungi \\
  Senseye, Siemens Digital Industries \\
  \And
  Lee Middleton \\
  Senseye, Siemens Digital Industries \\
}
\begin{document}
\maketitle

\begin{abstract}
Time-series foundation models (TSFMs) achieve strong forecast accuracy, yet accuracy alone does not determine practical value. The form of a forecast---point, quantile, parametric, or trajectory ensemble---fundamentally constrains which operational tasks it can support. We survey recent TSFMs and find that two-thirds produce only point or parametric forecasts, while many operational tasks require trajectory ensembles that preserve temporal dependence. We establish when forecast types can be converted and when they cannot: trajectory ensembles convert to simpler forms via marginalization without additional assumptions, but the reverse requires imposing temporal dependence through copulas or conformal methods. We prove that marginals cannot determine path-dependent event probabilities—infinitely many joint distributions share identical marginals but yield different answers to operational questions. We map six fundamental forecasting tasks to minimal sufficient forecast types and provide a task-aligned evaluation framework. Our analysis clarifies when forecast type, not accuracy, differentiates practical utility.
\end{abstract}

\section{Introduction}

Time-series foundation models (TSFMs) have emerged as a transformative approach to time-series forecasting, achieving strong zero-shot and few-shot performance across diverse datasets and domains \cite{jin_large_2023,liang_foundation_2024,zhang_large_2024,jiang_empowering_2024,su_large_2024,ye_survey_2024}. Recent models demonstrate impressive accuracy on benchmark tasks, matching or exceeding specialized statistical methods and earlier deep-learning architectures. However, forecast accuracy alone does not determine practical utility. The \textit{form} of a forecast—whether it outputs point forecasts, quantile forecasts, parametric forecasts, or trajectory ensembles—fundamentally constrains which downstream applications it can support.

Consider a risk manager evaluating portfolio value-at-risk (VaR) over a five-day horizon. A point forecast of daily returns, no matter how accurate, cannot answer ``\textit{what is the probability our cumulative loss exceeds 5\%?}''. A quantile forecast trained on levels $\{0.1,\ldots,0.9\}$ can interpolate within this range but cannot reliably estimate tail risk at the 1st or 99th percentile without (risky) extrapolation. A parametric forecast does provide a distribution at each time step, but computing the distribution of the five-day sum requires assumptions about temporal dependence. Knowing the distribution of predicted returns for tomorrow and for the day after is not sufficient to determine whether losses will cluster (high correlation) or cancel out (negative correlation)—yet this temporal structure is exactly what determines cumulative risk. Only a trajectory ensemble—sampled paths from the joint predictive distribution—directly yields the answer through Monte Carlo estimation, because it explicitly represents how returns co-vary over time. This fundamental limitation is not a modeling choice but a mathematical fact: per-step marginal distributions cannot uniquely determine probabilities of path-dependent events without additional assumptions about dependence structure (formalized in Proposition~\ref{prop:nonident}). Similar constraints arise across domains: predictive maintenance requires first-passage probabilities, capacity planning requires simultaneous forecast bands, and climate loss modeling requires scenario generation with realistic temporal structure.

\textbf{The gap.} Existing TSFM surveys \cite{jin_large_2023,liang_foundation_2024,zhang_large_2024,jiang_empowering_2024,su_large_2024,ye_survey_2024} comprehensively cover architectural innovations and training strategies. However, they provide limited guidance on which forecast types enable which operational tasks, when conversions between forecast types are valid, and how to select an evaluation metric appropriate for a given task. This gap matters increasingly as TSFMs mature: when multiple models achieve similar accuracy, forecast type becomes the primary differentiator of practical value. Moreover, practitioners may wish to ensemble forecasts from multiple TSFMs to improve robustness—a strategy proven effective across machine learning domains—but must first convert models to a common forecast type, potentially discarding valuable information. Our survey of published TSFMs reveals an important mismatch between model capabilities and application requirements: the majority produce only point forecasts or parametric forecasts, while trajectory ensembles—the most expressive forecast type capable of addressing path-dependent questions like threshold crossing and aggregate risk without post-processing—remain relatively rare. This detailed survey can be found in \Cref{app:survey}.

Two recent studies have examined TSFMs from a lens similar to ours, though with important limitations. \citeauthor{adler_calibration_2025} analyze the calibration properties of several existing TSFMs, but restrict their analysis to marginal calibration at individual future time steps without examining operational tasks. \citeauthor{achour_foundation_2025} explore conformal prediction methods for TSFMs, yet similarly focus only on marginal coverage without considering downstream applications. To our knowledge, no prior work has systematically examined the interplay between forecast types, operational tasks, forecast convertibility, and evaluation metrics.

\textbf{Our contributions.} This paper provides a systematic task-oriented analysis of TSFM forecasts, with formal theory characterizing when different forecast types enable different applications. We make four main contributions:

\begin{enumerate}
\item \textbf{A task-oriented taxonomy.} We formalize four forecast types—point forecasts, quantile forecasts, parametric forecasts, and trajectory ensembles—and characterize their expressiveness hierarchy (\Cref{sec:fore_def}).

\item \textbf{Task-forecast mapping with sufficiency results.} We identify six canonical forecasting problems spanning operational domains and derive the minimal forecast type that suffices for each. We prove when simpler forecast types do not suffice for path-dependent questions without additional assumptions (\Cref{sec:usecases}).

\item \textbf{Convertibility theory with impossibility results.} We formalize conversions between forecast types through a directed graph, proving when transformations require no additional assumptions (trajectory ensembles to any other form via marginalization), when they require structural assumptions (marginals to joint distributions via copulas), and establishing formal impossibility results for path-dependent questions from marginal forecast types (Propositions 1-3, \Cref{sec:convert}).

\item \textbf{Task-aligned evaluation framework.} We provide a systematic mapping of proper scoring rules to forecast types and operational tasks, clarifying when marginal metrics (\emph{e.g.}, CRPS) versus joint metrics (\emph{e.g.}, Energy Score) are required (\Cref{sec:evaluation}).
\end{enumerate}

Our analysis targets both TSFM developers choosing which forecast types to implement and practitioners selecting models for specific application contexts. We show that trajectory ensembles are strictly most expressive—convertible to all other forms through marginalization without additional assumptions—but that simpler forecast types suffice for many tasks. Conversely, attempting to answer path-dependent questions with marginal forecast types requires explicit dependence modeling and validation that is often overlooked in practice.

\textbf{Organization.} Section 2 defines the four forecast types. Section 3 maps six fundamental forecasting tasks to minimally-sufficient forecast types. Section 4 formalizes convertibility between forecast types. Section 5 presents task-aligned evaluation methodology. Section 6 concludes with implications for research and practice.

\section{Form}
\label{sec:fore_def}

We formalize four forecast types that TSFMs can produce. These differ fundamentally in what downstream questions they can answer. This taxonomy is not exhaustive—other forecast types are conceivable—but it encompasses all types observed in published TSFMs to date. For each type, we provide its mathematical definition, key properties, and operational implications. Throughout, we consider univariate forecasting: given history $\mathbf{y}_{1:T}=(y_1,\dots,y_T)$ with $y_t \in \mathbb{R}$, forecast the next $h$ time steps. The analysis extends naturally to multivariate series and to models incorporating time or exogenous covariates. \Cref{fig:tsfm_outputs} illustrates all four forecast types for the same underlying forecast. The full review of published TSFMs and the output type of each can be found in \Cref{app:survey}.

\begin{figure}
  \centering
  \includegraphics[width=0.9\linewidth]{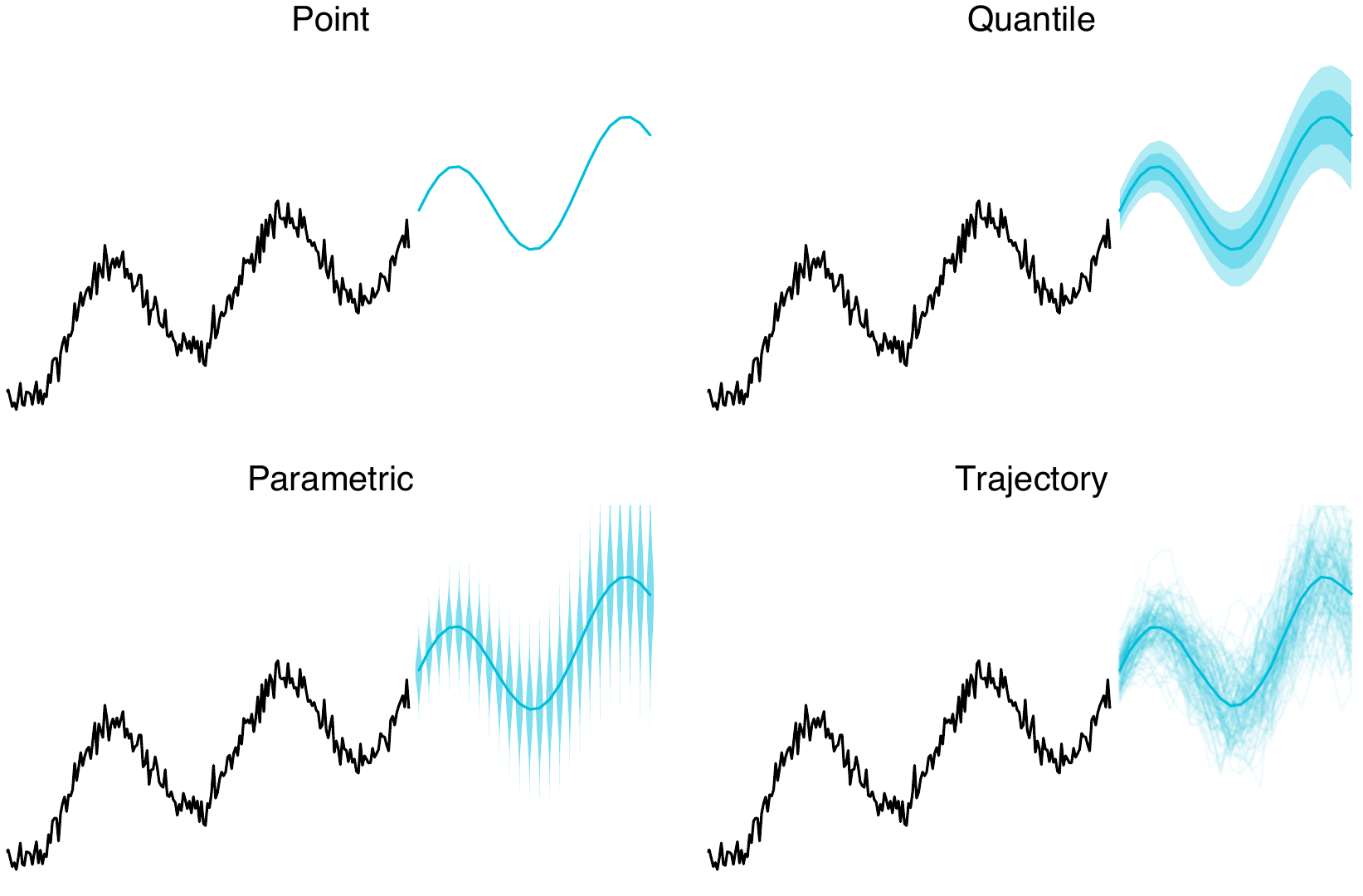}
  \caption{Four forecast types derived from the same underlying predictive distribution. \emph{Point} shows per-step medians or means—note the smooth line is not a sampled trajectory but a sequence of coordinate-wise summaries. \emph{Quantile} and \emph{Parametric} display per-step marginal distributions; uncertainty bands at each time step do not imply joint coverage over the horizon. \emph{Trajectory} shows sampled paths from the joint distribution, preserving temporal dependence.}
  \label{fig:tsfm_outputs}
\end{figure}

\paragraph{Terminology.}
We use \emph{parametric} to mean \emph{per-step parametric marginals} (one predictive CDF per future time step, $F_{T+k}$).
\emph{Trajectory ensemble} means sampled joint paths over the forecast horizon.
We reserve \emph{marginals} for per-step distributions and \emph{joint} for the full path distribution.

\paragraph{Notation.}
We assume that before time $T$ history is known. After $T$, the future is unknown and quantities there are forecasts. We refrain from using mathematical hat notation to avoid clutter, \emph{e.g.} the point forecast $\hat{y}_{T+k}$ will be denoted simply as $y_{T+k}$. In \Cref{sec:evaluation}, where we discuss both forecasts and actual realizations, we introduce the hats for clarity.

\subsection{Point forecasts}

A point forecast outputs a deterministic value per future time step: $\mathbf{y}_{T+1:T+h}=(y_{T+1},\dots,y_{T+h})$ where $y_{T+k} \in \mathbb{R}$. This representation provides no native uncertainty quantification. Practitioners must add external calibration through post-hoc methods. Common approaches include split conformal prediction, which provides coverage guarantees by computing residual quantiles on calibration data, and residual bootstrapping, which adds realistic noise patterns from historical forecast errors \cite{kuleshov_accurate_2018,angelopoulos_gentle_2021,stankeviciute_conformal_2021,xu_conformal_2020,xu_conformal_2024,achour_foundation_2025}.

However, conformal methods guarantee \emph{coverage} (the fraction of realizations within intervals) but not calibrated event probabilities. They answer ``\textit{where will 95\% of realizations fall?}'' not ``\textit{what is $\Pr(\text{event})$?}''. Bootstrap ensembles inherit the residual model's assumptions and may miss tail behavior or regime changes \cite{adler_calibration_2025}.

A multi-step point forecast $\mathbf{y}_{T+1:T+h}$ is a vector of \emph{per-step functionals}, typically the mean $y_{T+k} = \mathbb{E}[Y_{T+k} \mid \mathbf{y}_{1:T}]$ if trained with squared error, or the median $y_{T+k} = \text{median}(Y_{T+k} \mid \mathbf{y}_{1:T})$ if trained with absolute error. Connecting these values yields a convenient visualization, but the connected line is just a sequence of per-step summaries, not a draw from the joint predictive distribution $p(\mathbf{y}_{T+1:T+h} \mid \mathbf{y}_{1:T})$. This distinction has an immediate visual consequence: point forecasts appear systematically smoother than individual realizations. When stochastic trajectories exhibit realistic high-frequency variability, coordinate-wise averaging (mean/median) attenuates high-frequency variation while preserving low-frequency components, producing a smooth central tendency that no single trajectory follows. This smoothness is mathematically correct, not an artifact.

For applications requiring plausible scenarios—stress-testing operational systems, simulating decision rules, or generating realistic what-if paths—the point forecast provides an answer to the wrong question. It shows expected central tendency, not a forecasted realization. Path-dependent quantities like threshold crossings, run lengths, and aggregate sums computed from point forecasts lack probabilistic meaning because the underlying joint distribution is absent.

\subsection{Quantile forecasts}

Given a fixed set of $L$ quantile levels in ascending order, $\mathcal{Q}=\{q_\ell\}_{\ell=1}^L\subset(0,1)$ (\emph{e.g.}, $\mathcal{Q}=\{0.1, 0.2, \ldots, 0.9\}$), the TSFM produces quantile values $\{Q_{T+k}(q_\ell)\}_{\ell=1}^L$ for each future step $k=1,\ldots,h$. These define marginal distributions at each time step. Quantile forecasts enable probabilistic statements about individual time steps without assuming a specific distributional family. They are robust to misspecification in the tails if the training quantile levels span the region of interest, and interpolation between predicted quantiles is straightforward via monotone splines or linear methods.

Nevertheless, quantile forecasts are constrained to the levels used during training. Extrapolating to extreme quantiles (\emph{e.g.}, 1st or 99th percentile for tail risk) when trained on $\{0.1,\ldots,0.9\}$ is unreliable without retraining. More fundamentally, these marginals do not identify a joint distribution over the horizon—answering path-dependent questions requires additional assumptions about temporal dependence (see \Cref{sec:convert}).

\subsection{Parametric forecasts (per-step marginals)}

Parametric forecasts produce distribution parameters for each future time step, defining marginal CDFs $\{F_{T+k}\}_{k=1}^h$. For a parametric family with parameter vector $\boldsymbol{\theta} \in \Theta$, the TSFM produces $(\boldsymbol{\theta}_{T+1}, \ldots, \boldsymbol{\theta}_{T+h})$. For example, Gaussian forecasts yield $\boldsymbol{\theta}_{T+k} = (\mu_{T+k}, \sigma_{T+k}^2)$ at each step. The choice of distributional family is crucial and decided at train time. This representation enables analytical calculations when the distributional family supports them—moments, tail probabilities, and interval inversion are often available in closed form. Common families include Gaussian, Student-$t$, negative binomial, and their mixtures.

When the chosen parametric family correctly matches the data-generating process, parametric forecasts achieve the Cramér-Rao lower bound asymptotically for parameter estimation; this can translate to sharper predictions in some settings, but does not by itself guarantee optimal prediction-interval width \cite{lehmann_theory_1998,a_w_van_der_vaart_asymptotic_1998}. However, this advantage disappears under misspecification. When the true distribution differs from the assumed family—for instance, heavy-tailed or skewed data modeled as Gaussian—parameter estimates become biased and prediction intervals lose their calibration guarantees. Since TSFMs are designed to forecast arbitrary time series without domain-specific knowledge, committing to a single parametric family across all data seems difficult to justify. Moreover, as with quantiles, these marginals do not identify a joint distribution; practitioners often assume independence or add dependence post-hoc via copulas or rank reordering (see \Cref{sec:convert}).

\subsection{Trajectory ensembles (joint paths)}

Trajectory ensembles produce sampled paths from the joint predictive distribution: $(\mathbf{y}_{T+1:T+h}^{(1)}, \ldots, \mathbf{y}_{T+1:T+h}^{(M)})$ where each $\mathbf{y}_{T+1:T+h}^{(m)}$ is a complete future realization and $M$ is the ensemble size. These trajectories preserve temporal dependence: $p(\mathbf{y}_{T+1:T+h}|\mathbf{y}_{1:T}) = \prod_{k=1}^{h} p(y_{T+k}|\mathbf{y}_{1:T+k-1})$. Generation methods include autoregressive sampling \cite{ansari_chronos_2024}, diffusion models \cite{cao_timedit_2025,yuan_diffusion-ts_2024,tashiro_csdi_2021}, and latent variable approaches \cite{hu_swinvrnn_2023,kucinski_tsgt_2024}.

Trajectory ensembles are strictly most expressive. They can be converted to quantile forecasts via empirical order statistics, to parametric forecasts via empirical CDFs, and to point forecasts via means or medians (see \Cref{sec:convert}). Because they represent the joint distribution, they directly support path-dependent questions: first-passage times, threshold crossings, exceedance durations, and cumulative targets all require joint future paths, not just per-step marginals \cite{redner_guide_2001}. 

\subsection{The expressiveness hierarchy}

Point forecasts provide no native uncertainty quantification. Among probabilistic forecast types, quantile forecasts and parametric forecasts provide \emph{marginal} uncertainty at each time step, while trajectory ensembles provide a \emph{joint} distribution over the full horizon. This hierarchy determines which tasks each forecast type can support, as detailed in \Cref{sec:usecases}. Critically, the reverse conversions—from marginals to joints—generally lose temporal dependence information or require explicit assumptions about dependence structure (copulas, reordering schemes). For path-dependent questions, such conversions can be invalid without correctly modeling temporal dependence. We formalize these convertibility relationships in \Cref{sec:convert}.

\section{Function}
\label{sec:usecases}

We identify six canonical forecasting tasks that span operational domains. These are very broad and cover a large variety of use cases. For each, we define the problem, formalize the mathematical objective, and determine the minimal forecast type that suffices. \Cref{tab:use_cases} summarizes the complete mapping; subsections below provide the essential formulations. Extended mathematical derivations and detailed examples appear in \Cref{app:b}.

\begin{table}
\centering
\small
\renewcommand{\arraystretch}{1.3}
\begin{tabular}{@{}lcccc@{}}
\toprule
& \multicolumn{4}{c}{\textbf{Forecast type}} \\
\cmidrule(l){2-5}
\textbf{Forecasting task} & \textbf{Point} & \textbf{Quantile} & \textbf{Parametric} & \textbf{Trajectory} \\
\midrule
Pointwise intervals & \mdot & \gdot & \gdot & \gdot \\
Pathwise bands & \mdot & \mdot & \mdot & \gdot \\
Event probabilities & \rdot & \mdot & \mdot & \gdot \\
Threshold crossing & \rdot & \mdot & \mdot & \gdot \\
Window aggregates & \mdot & \mdot & \mdot & \gdot \\
Scenario generation & \rdot & \mdot & \mdot & \gdot \\
\bottomrule
\end{tabular}
\vspace{0.75em}
\caption{Task-forecast compatibility matrix for TSFM forecasts. Symbols: \gdot~(sufficient—natively supports task), \mdot~(feasible with assumptions—requires post-processing, copulas, or conformal methods), \rdot~(unsuitable—cannot reliably support task). Point forecasts need conformal prediction for uncertainty quantification but cannot estimate probabilities. Quantile and parametric forecasts require copulas to reconstruct joint distributions for path-dependent tasks. Trajectory ensembles natively support all tasks.}
\label{tab:use_cases}
\end{table}

\subsection{Pointwise prediction intervals}
\label{sec:use_case_point_uncertainty}

Many operational applications require per-step uncertainty: ``\textit{what will the value be at each future time and what is my confidence?}''. A manufacturer monitoring furnace temperature for quality control needs minute-level prediction intervals (PIs) to trigger alarms. A telecom operator uses per-interval traffic PIs to schedule autoscaling. Both applications are local in time—marginal uncertainty at each step suffices.

Mathematically, given history $\mathbf{y}_{1:T}$ and horizon $h$, a TSFM induces marginals $p_{T+k}(y) = p(y_{T+k} \mid \mathbf{y}_{1:T})$ with CDF $F_{T+k}$. For confidence level $1-\alpha$, the target is the interval $[L_{T+k},U_{T+k}] = [F_{T+k}^{-1}(\alpha/2), F_{T+k}^{-1}(1-\alpha/2)]$ at each step $k=1,\ldots,h$.

Each forecast type supports this task through different methods:

\begin{center}
\small
\begin{tabular}{@{}lp{9.5cm}@{}}
\toprule
\textbf{Forecast type} & \textbf{Method} \\
\midrule
Point \mdot & Post-hoc via split conformal prediction: compute $(1-\alpha)$-quantile of absolute residuals per future time step and add/subtract from forecast (formalized in \Cref{sec:point_convert}). \\
\addlinespace
Quantile \gdot & Interpolate predicted quantiles to levels $\alpha/2, 1-\alpha/2$. Extrapolation to tails requires retraining. \\
\addlinespace
Parametric \gdot & Invert $F_{T+k}$ directly from per-step parameters. \\
\addlinespace
Trajectory \gdot & Compute empirical quantiles from ensemble $\{y^{(m)}_{T+k}\}_m$. \\
\bottomrule
\end{tabular}
\end{center}

\subsection{Pathwise forecast bands}
\label{sec:use_case_path_uncertainty}

Other applications require simultaneous coverage over the entire horizon: ``\textit{give me a band containing the next 24 hours with 95\% confidence}''. Cloud engineers planning maintenance need a 24-h CPU band that holds jointly across all time steps. Dam managers scheduling releases need a 7-day discharge band. Unlike pointwise intervals, these require pathwise guarantees where temporal dependence matters.

Mathematically, for confidence $1-\alpha$, the target is simultaneous coverage: $\Pr(y_{T+1} \in [L_1,U_1], \ldots, y_{T+h} \in [L_h,U_h] \mid \mathbf{y}_{1:T}) \ge 1-\alpha$. Each forecast type supports this task through different methods:

\begin{center}
\small
\begin{tabular}{@{}lp{9.5cm}@{}}
\toprule
\textbf{Forecast type} & \textbf{Method} \\
\midrule
Point \mdot & Pathwise split conformal with sup-norm scores (formalized in \Cref{sec:point_convert}). \\
\addlinespace
Quantile \mdot & Pathwise conformal around predicted quantile curves, or reconstruct CDFs (\Cref{sec:quant2dens}) and sample via copula (\Cref{sec:dens2paths}), or Šidák/Bonferroni adjustment \cite{sidak_rectangular_1967}. \\
\addlinespace
Parametric \mdot & Generate trajectory ensemble via copula-based reconstruction (\Cref{sec:dens2paths}), then apply trajectory method; or use conservative Šidák/Bonferroni adjustment \cite{sidak_rectangular_1967}. \\
\addlinespace
Trajectory \gdot & For each path $m$, compute normalized deviation $d^{(m)} = \max_k |y^{(m)}_{T+k} - m_k|/s_k$ where $m_k$ and $s_k$ are the per-step ensemble center and scale (\emph{e.g.}, median and median absolute deviation, or mean and standard deviation). Set $c$ as $(1-\alpha)$-quantile of $\{d^{(m)}\}$ and band $[m_k - cs_k, m_k + cs_k]$. \\
\bottomrule
\end{tabular}
\end{center}

Bands constructed from a forecast ensemble (\emph{e.g.}, using the empirical $1{-}\alpha$ envelope) correctly reflect the model's internal uncertainty (\emph{i.e.} are \emph{credible} under the model) but do not guarantee $1{-}\alpha$ coverage in finite real-world data unless the model is perfectly specified. Since TSFMs are inevitably misspecified for real-world data, empirical coverage should be validated on held-out sets. For formal frequentist coverage guarantees, apply pathwise split conformal calibration on held-out windows to adjust band width while preserving time dependence (see \Cref{sec:point_convert}).

\subsection{Event and tail probabilities}
\label{sec:use_case_event}

Risk management requires probabilities of events defined over the full horizon: ``\textit{what is the probability our cumulative loss exceeds a threshold?}''. Financial risk managers compute weekly value-at-risk (VaR)—the tail quantile of portfolio loss over five trading days. Logistics planners estimate the probability that total weekly inbound volume exceeds warehouse capacity. These functionals (sums, maxima, \textit{etc.}) demand a coherent joint forecast.

Mathematically, define a functional $g(\mathbf{y}_{T+1:T+h})$ and event $\{g > c\}$. The target is $\Pr(g(\mathbf{y}_{T+1:T+h}) > c \mid \mathbf{y}_{1:T})$. Each forecast type supports this task through different methods:

\begin{center}
\small
\begin{tabular}{@{}lp{9.5cm}@{}}
\toprule
\textbf{Forecast type} & \textbf{Method} \\
\midrule
Point \rdot & Unsuitable—conformal methods yield coverage bands, not calibrated event probabilities \citep{angelopoulos_gentle_2021}. \\
\addlinespace
Quantile \mdot & Reconstruct CDFs (\Cref{sec:quant2dens}), add copula (\Cref{sec:dens2paths}), estimate by Monte Carlo. \\
\addlinespace
Parametric \mdot & Add temporal dependence via copula (\Cref{sec:dens2paths}), simulate paths, estimate by Monte Carlo; some analytical shortcuts exist (\emph{e.g.}, sums of Gaussians). \\
\addlinespace
Trajectory \gdot & Direct Monte Carlo: $\widehat{\Pr}(g > c) = \frac{1}{M}\sum_m \mathbf{1}\{g(\mathbf{y}^{(m)}) > c\}$. \\
\bottomrule
\end{tabular}
\end{center}

VaR is a canonical example: for portfolio returns over five days, compute the tail quantile of cumulative loss via marginal CDFs with assumed dependence (parametric/quantile outputs) or empirical quantiles directly (trajectories). See \Cref{app:var} for a complete formulation.

\subsection{Threshold crossing and persistence}
\label{sec:use_case_threshold}

Predictive maintenance and environmental monitoring require first-passage and run-length distributions: ``\textit{when will vibration first exceed my machine's safety limit?}'' or ``\textit{how long will pollution stay above the legal threshold?}''.

Mathematically, given threshold $C$ and crossing direction $\triangleright \in \{\ge, \le\}$, define first hitting time $\tau = \inf\{k \in \{1,\ldots,h\}: y_{T+k} \triangleright C\}$ with censoring at $h$ if no crossing occurs. The target is the survival function $S(k) = \Pr(\tau > k \mid \mathbf{y}_{1:T})$. Each forecast type supports this task through different methods:

\begin{center}
\small
\begin{tabular}{@{}lp{9.5cm}@{}}
\toprule
\textbf{Forecast type} & \textbf{Method} \\
\midrule
Point \rdot & Only point estimate $\hat{\tau} = \min\{k: y_{T+k} \triangleright C\}$; no uncertainty quantification. \\
\addlinespace
Quantile \mdot & Reconstruct CDFs (\Cref{sec:quant2dens}), then apply parametric method; inherits independence limitation. \\
\addlinespace
Parametric \mdot & Compute per-step crossing probabilities from marginal CDFs. Under independence: $S(k) = \prod_{j=1}^k (1-p_j)$ where $p_j = \Pr(y_{T+j} \triangleright C)$. This ignores path history and underestimates persistence. \\
\addlinespace
Trajectory \gdot & Direct empirical estimate: $$\widehat{\Pr}(\tau=k) = \frac{1}{M}\sum_m \mathbf{1}\{\text{first crossing at }k\text{ in path }m\}$$ \\
\bottomrule
\end{tabular}
\end{center}

The independence approximation in the parametric method typically underestimates persistence in positively autocorrelated series: runs of high (or low) values make crossings cluster in time, violating the memoryless assumption. Trajectory ensembles preserve autocorrelation structure and yield correct run-length distributions.

Complete survival analysis formulation, including hazard functions and worked examples for vibration monitoring and pollution persistence, appears in \Cref{app:survival}.

\subsection{Window aggregates}
\label{sec:use_case_aggregates}

Planning tasks often require totals over a time window: ``\textit{what is the expected demand during the holiday week?}''. Retailers setting inventory and staffing need the distribution of total demand over a 7-day promotional period. Public health agencies allocating resources need expected influenza cases over the next 14 days.

Mathematically, let $W \subseteq \{1,\ldots,h\}$ denote the window of interest---within the forecast horizon $h$. The target is the aggregate $Z = \sum_{k \in W} y_{T+k}$, either its expectation $\mathbb{E}[Z \mid \mathbf{y}_{1:T}]$ or full distribution. Each forecast type supports this task through different methods:

\begin{center}
\small
\begin{tabular}{@{}lp{9.5cm}@{}}
\toprule
\textbf{Forecast type} & \textbf{Method} \\
\midrule
Point \rdot & Sum forecasts $\hat{Z} = \sum_{k \in W} y_{T+k}$; provides baseline but no uncertainty. \\
\addlinespace
Quantile \mdot & Approximate per-step means by integrating quantile functions, then sum. For full distribution, sample uniformly and sum. Note: quantile of sum $\neq$ sum of quantiles.\\
\addlinespace
Parametric \mdot & Under time independence, mean: $\mathbb{E}[Z] = \sum_{k \in W} \mathbb{E}[Y_{T+k}]$ from marginal parameters. Full distribution: analytically available for some families under independence (\emph{e.g.}, sums of Gaussians); otherwise, add dependence via copula (\Cref{sec:dens2paths}) and simulate. \\
\addlinespace
Trajectory \gdot & For each path, compute $Z^{(m)} = \sum_{k \in W} y^{(m)}_{T+k}$. Then $\widehat{\mathbb{E}}[Z] = \frac{1}{M}\sum_m Z^{(m)}$ and empirical CDF $\widehat{F}_Z(z) = \frac{1}{M}\sum_m \mathbf{1}\{Z^{(m)} \le z\}$; preserves time dependence. \\
\bottomrule
\end{tabular}
\end{center}

\subsection{Scenario generation and ranking}
\label{sec:use_case_scenarios}

Strategic planning requires concrete future scenarios to stress-test decisions: ``\textit{generate plausible metro load paths to test capacity limits}'' or ``\textit{rank climate-loss scenarios by probability and severity}''. Transportation operators simulate multiple load trajectories to identify bottlenecks. Insurers generate and rank weather scenarios to set reserves and premiums.

Mathematically, the target is to sample scenarios $\{\mathbf{y}^{(m)}_{T+1:T+h}\}_{m=1}^M \sim p(\mathbf{y}_{T+1:T+h} \mid \mathbf{y}_{1:T})$, compute functionals $g_j(\mathbf{y}^{(m)})$ (\emph{e.g.}, peak load, exceedance duration, cumulative demand), and rank by probability or severity. Each forecast type supports this task through different methods:

\begin{center}
\small
\begin{tabular}{@{}lp{9.5cm}@{}}
\toprule
\textbf{Forecast type} & \textbf{Method} \\
\midrule
Point \rdot & Single baseline $\mathbf{y}_{T+1:T+h}$; residual bootstrap gives rough variability but is uncalibrated. \\
\addlinespace
Quantile \mdot & Reconstruct CDFs (\Cref{sec:quant2dens}), add copula (\Cref{sec:dens2paths}), sample paths; quality depends on copula choice. \\
\addlinespace
Parametric \mdot & Independent per-step sampling breaks temporal coherence; use copula-based reconstruction (\Cref{sec:dens2paths}) for better paths; quality depends on copula choice. \\
\addlinespace
Trajectory \gdot & Scenarios available directly. Compute functionals: peak $P^{(m)} = \max_k y^{(m)}_{T+k}$, exceedance $E^{(m)} = \sum_k \mathbf{1}\{y^{(m)}_{T+k} > C\}$, cumulative $Z^{(m)} = \sum_k y^{(m)}_{T+k}$. Rank by empirical CDFs or cluster and assign weights. \\
\bottomrule
\end{tabular}
\end{center}

Detailed scenario ranking procedures for insurance applications, including clustering, severity metrics, and exceedance curves, appear in \Cref{app:insurance}.

\section{Conversion}
\label{sec:convert}

Section~\ref{sec:usecases} demonstrated that different operational tasks require different forecast types. In practice, practitioners often need to convert between forecast types for two primary reasons: (1) adapting an existing TSFM to new tasks, and (2) ensembling forecasts from multiple models. Ensemble methods—combining predictions from multiple models—are a commonplace in modern machine learning. In forecasting, ensembles aggregate information from models with different architectures, training data, or inductive biases, often producing more robust predictions than any single model \cite{timmermann_forecast_2006}. However, when constituent models produce different forecast types (\emph{e.g.}, one outputs trajectory ensembles, another outputs quantiles), practitioners must convert to a common representation before combination. This conversion can preserve or discard critical information depending on the target forecast type and the conversion method. We formalize when such conversions are valid, what information they preserve or discard, and which conversions are impossible without additional assumptions.

\Cref{fig:overview} summarizes the convertibility landscape as a directed graph. Green arrows indicate conversions that require no additional assumptions beyond the forecast itself (marginalization from trajectory ensembles); orange arrows mark conversions that require structural assumptions (copulas, distributional families) to reconstruct information not present in the source forecast type. We establish three theoretical results characterizing this structure, then provide practical conversion methods for each forecast type.

\begin{figure}[t]
\centering
\includegraphics[width=0.6\textwidth]{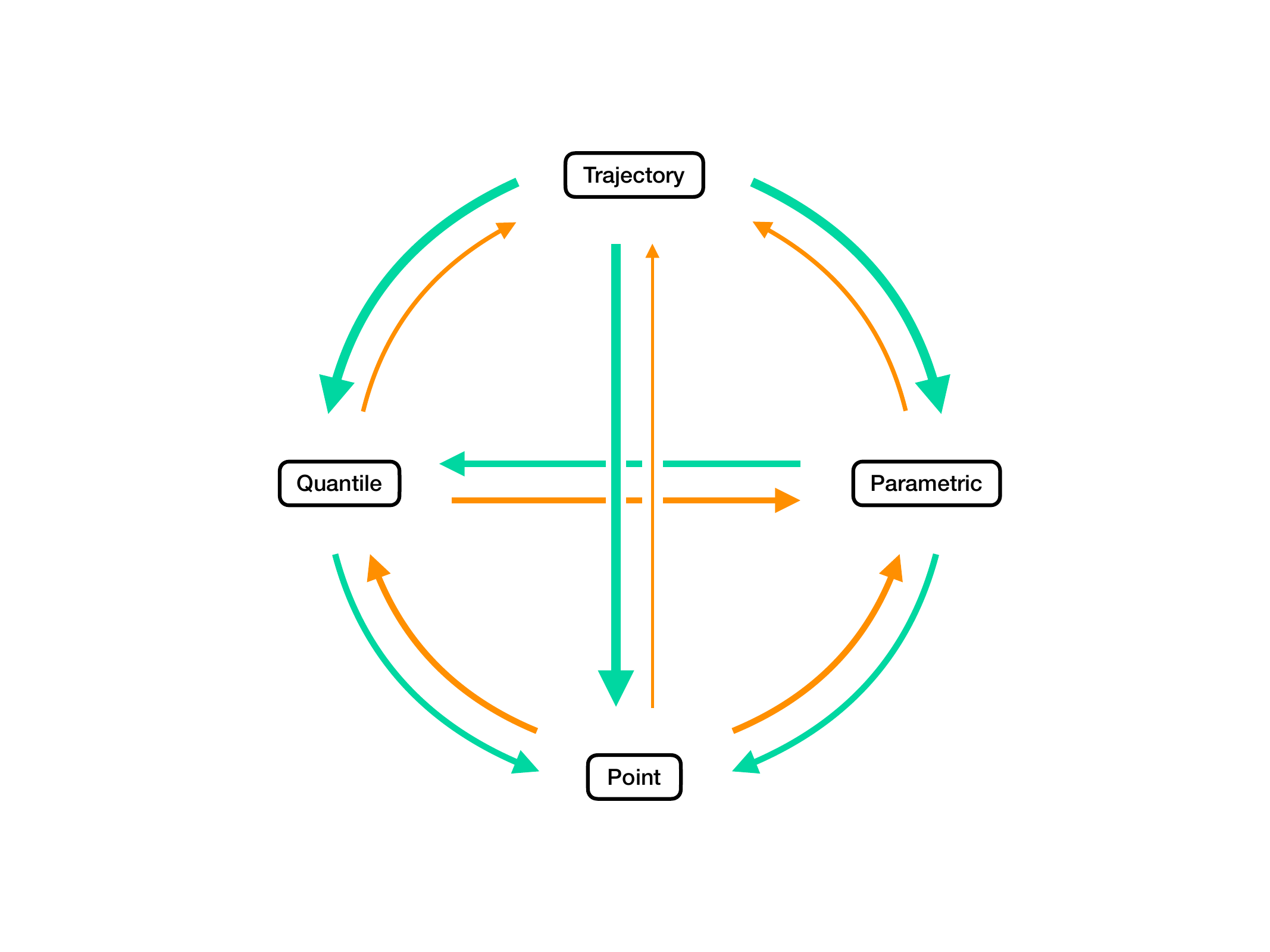}
\caption{Convertibility between forecast types. Trajectory ensembles (top) are strictly most expressive, enabling direct conversions (green arrows) to parametric forecasts, quantiles, and points through marginalization—no additional assumptions required, though temporal dependence information is discarded. Conversions to richer forms (orange arrows) require structural assumptions: copulas for temporal dependence (parametric/quantile → trajectory), distributional families (quantile → parametric), or conformal prediction (point → probabilistic outputs). Lateral conversions between parametric and quantile forecasts preserve marginal information but differ in form. All conversions are detailed in Section~\ref{sec:convert} and summarized in Table~\ref{tab:conversion_summary}.}
\label{fig:overview}
\end{figure}

\subsection{Theoretical foundations}

Three propositions characterize the convertibility structure. Proposition~\ref{prop:expressiveness} establishes the expressiveness hierarchy; Proposition~\ref{prop:nonident} proves that marginals cannot identify path-dependent probabilities; Proposition~\ref{prop:quantiles-to-parametric} determines when quantiles uniquely recover parametric forms.

\begin{proposition}[Expressiveness hierarchy]\label{prop:expressiveness}
Trajectory ensembles are strictly most expressive. They determine parametric forecasts, quantile forecasts, and point forecasts via marginalization without requiring additional assumptions and modeling beyond the ensemble itself.
\end{proposition}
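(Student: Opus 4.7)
The plan is to decompose the proposition into two separate claims: (i) the constructive claim that a trajectory ensemble yields the other three forecast types via explicit marginalization, and (ii) the strictness claim that the reverse conversions cannot be performed without further modeling input. Claim (ii) is essentially an application of Proposition~\ref{prop:nonident}, so I would keep the main work on claim (i) and then invoke the non-identification result for strictness.

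For claim (i), I would begin by observing that a trajectory ensemble $\{\mathbf{y}^{(m)}_{T+1:T+h}\}_{m=1}^M$ defines an empirical measure on $\mathbb{R}^h$ whose $k$-th coordinate projection yields, with no auxiliary inputs, the per-step empirical marginal $F_{T+k}(y) = M^{-1}\sum_m \1\{y^{(m)}_{T+k}\le y\}$. From this object I would then read off each simpler forecast type by a direct recipe: the point forecast as the per-step empirical mean or median of $\{y^{(m)}_{T+k}\}_m$; the quantile forecast as the empirical quantiles $F_{T+k}^{-1}(q_\ell)$ at the prescribed levels; and the parametric forecast, for any family already fixed by the definition of that forecast type, as the parameter vector $\boldsymbol{\theta}_{T+k}$ obtained from a standard estimator (e.g.\ MLE or method of moments) applied to the per-step sample. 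I would stress that every step uses only the ensemble itself: no temporal dependence assumption is invoked (since it is discarded by marginalization), no calibration data is required, and no distributional hypothesis is added beyond what each target forecast type already specifies by definition.

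For claim (ii), I would appeal to Proposition~\ref{prop:nonident}: since distinct joint distributions on the forecast horizon can share identical per-step marginals yet assign different probabilities to path-dependent events, no procedure consuming only point, quantile, or parametric information can recover the underlying joint distribution. Hence the expressiveness map from trajectory ensembles to the other three types is one-way, which is exactly the \emph{strict} part of the statement.

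The main obstacle is purely interpretive rather than technical, and it concerns the parametric case: the phrase ``without additional assumptions'' invites the objection that naming a parametric family is itself an assumption. I would defuse this in the proof by explicit framing---echoing the terminology paragraph in \Cref{sec:fore_def}, the parametric family is part of the definition of a parametric forecast, so what Proposition~\ref{prop:expressiveness} asserts is that once the family is fixed, no further assumptions (no dependence structure, no external data, no copula) are needed to extract the per-step parameters. With that convention stated, all remaining steps are mechanical and the proof reduces to a short enumeration of the three marginalization recipes plus one citation to Proposition~\ref{prop:nonident}.
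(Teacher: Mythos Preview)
Your proposal is correct and follows essentially the same route as the paper: marginalize the ensemble to obtain per-step empirical distributions (from which points, quantiles, and parametric fits are read off), then use copula non-uniqueness for strictness. Two minor presentational differences: the paper explicitly invokes the Glivenko--Cantelli theorem to justify that the empirical marginals converge to the true $F_{T+k}$, and it inlines Sklar's theorem (with the independence vs.\ comonotonic copula contrast) directly in this proof rather than forward-citing Proposition~\ref{prop:nonident}; your packaging via Proposition~\ref{prop:nonident} is logically equivalent but introduces a forward reference.
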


\begin{proof}
Let $\{\mathbf{y}^{(m)}_{T+1:T+h}\}_{m=1}^M$ denote trajectory samples. For any step $k$, the empirical CDF $\widehat{F}_{T+k}(x) = \frac{1}{M}\sum_{m=1}^M \mathbf{1}\{y^{(m)}_{T+k} \le x\}$ converges uniformly to the true marginal $F_{T+k}$ by the Glivenko-Cantelli theorem \cite{a_w_van_der_vaart_weak_2013}. Quantiles follow by inversion and points are summaries (mean, median, \emph{etc.}). Thus trajectories determine all simpler forms through direct marginalization, requiring no distributional assumptions or external modeling. However, marginalization discards the joint distribution structure—temporal dependence information is irreversibly lost.

Strictness follows from Sklar's theorem \cite{sklar_random_1973}: any joint distribution can be written as $F(y_1,\ldots,y_h) = C(F_{T+1}(y_1),\ldots,F_{T+h}(y_h))$ where $C$ is a copula and $F_{T+k}$ are the marginals. For $h \ge 2$, infinitely many copulas $C$ yield the same marginals $\{F_{T+k}\}$ but different joint distributions. For example, the independence copula $C^\perp(u_1,\ldots,u_h) = \prod_k u_k$ and the comonotonic copula $C^+(u_1,\ldots,u_h) = \min(u_1,\ldots,u_h)$ preserve identical marginals but produce opposite temporal dependence structures, yielding different probabilities for path-dependent events \cite{nelsen_introduction_2006}. Thus marginals cannot uniquely determine the joint distribution: trajectory ensembles contain strictly more information.
\end{proof}

\begin{proposition}[Non-identifiability of path-dependent questions from marginals]\label{prop:nonident}
Fix $h \ge 2$ and continuous marginals $F_1,\ldots,F_h$. There exist at least two distinct joint distributions with these marginals that yield different probabilities for path-dependent events.
\end{proposition}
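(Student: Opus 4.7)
The plan is to prove non-identifiability by explicitly constructing two joint distributions sharing the prescribed marginals $F_1,\ldots,F_h$ and then exhibiting a single path-dependent event whose probability separates them. This piggy-backs directly on the Sklar-theorem machinery already invoked in the proof of Proposition~\ref{prop:expressiveness}.

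First, I would realise two copulas on the common marginal skeleton. Let $U$ and $U_1,\ldots,U_h$ be uniform on $(0,1)$ with the $U_k$ mutually independent, and define $Y_k^{\perp} := F_k^{-1}(U_k)$ and $Y_k^{+} := F_k^{-1}(U)$. Continuity of each $F_k$ makes the quantile inverses well defined, and the probability integral transform guarantees that both random vectors have the prescribed marginals. The first realises the independence copula $C^{\perp}$, the second the comonotonic copula $C^{+}$; they are manifestly distinct as probability measures on $\mathbb{R}^h$.

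Second, I would choose one path-dependent event whose probability cleanly distinguishes the two joints. The maximum-exceedance event $E_c = \{\max_{k \le h} Y_k > c\}$ is the natural candidate, since it recurs throughout \Cref{sec:usecases} and depends essentially on the joint law. Because each $F_k$ is continuous, I can pick $c$ with $u_k := F_k(c) \in (0,1)$ for every $k$; if the marginal supports differ drastically I would instead work with a coordinatewise event $\{Y_k > c_k \text{ for some } k\}$ and pick interior quantiles $c_k$ one at a time. A one-line calculation then gives $\Pr^{\perp}(E_c) = 1 - \prod_{k=1}^h u_k$ and $\Pr^{+}(E_c) = 1 - \min_{k \le h} u_k$. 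Since every $u_k \in (0,1)$, the strict inequality $\prod_k u_k < \min_k u_k$ forces $\Pr^{\perp}(E_c) \neq \Pr^{+}(E_c)$, which both witnesses distinctness of the two joints and exhibits a path-dependent functional on which they disagree.

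I do not expect a genuine obstacle: once the two copulas are written down, both the marginal-preservation check and the probability computation are routine. The only place requiring a sentence of care is the threshold choice, and continuity of the $F_k$ handles it immediately, as noted. If I wanted to strengthen the statement to a continuum of non-identifiable joints, I would substitute the one-parameter family of Fréchet convex combinations $\lambda C^{+} + (1-\lambda) C^{\perp}$ for $\lambda \in [0,1]$ and observe that $\Pr(E_c)$ varies affinely in $\lambda$, but this is beyond what Proposition~\ref{prop:nonident} asks for.
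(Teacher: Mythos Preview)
Your proof is correct and follows the same Sklar-theorem strategy as the paper, but with two differences worth noting. First, the paper pairs the comonotonic copula $C^{+}$ with the countermonotonic copula $C^{-}$ (the Fr\'echet--Hoeffding extremes), whereas you pair $C^{+}$ with the independence copula $C^{\perp}$; either choice works, but yours avoids the mild awkwardness that $C^{-}$ is only a genuine copula for $h=2$. Second, and more substantively, the paper argues qualitatively that ``distinct dependence structures yield different probabilities for path-dependent events'' and then appeals to vine decompositions for $h>2$, while you pick the explicit event $\{\max_k Y_k > c_k\}$ and compute $\Pr^{\perp}(E) = 1-\prod_k u_k \neq 1-\min_k u_k = \Pr^{+}(E)$ directly for all $h\ge 2$. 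Your version is thus more elementary and more self-contained: it delivers a concrete separating event with a one-line calculation and needs no auxiliary machinery for higher dimensions. The paper's framing buys a broader conceptual picture (the full Fr\'echet range), but for what Proposition~\ref{prop:nonident} actually asserts, your argument is tighter.
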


\begin{proof}
By Sklar's theorem, any joint distribution with marginals $F_1, F_2$ can be written as $F(y_1,y_2) = C(F_1(y_1), F_2(y_2))$ for some copula $C$ \cite{sklar_random_1973}. Since the copula is not determined by the marginals alone, infinitely many joint distributions share the same marginals but differ in their dependence structure.

The Fréchet-Hoeffding bounds provide extremal examples: the comonotonic copula $C^+(u_1,u_2) = \min(u_1,u_2)$ induces perfect positive dependence (when one variable is high, the other tends to be high), while the countermonotonic copula $C^-(u_1,u_2) = \max(u_1+u_2-1,0)$ induces perfect negative dependence (when one variable is high, the other tends to be low). Both preserve arbitrary marginals $F_1, F_2$ but produce opposite temporal dependence \cite{nelsen_introduction_2006}.

Consider any path-dependent event defined by a functional $g(Y_1,Y_2)$ that depends non-trivially on the joint behavior—for instance, sums, products, maxima, or threshold crossings. Under comonotonic coupling, extreme values of $Y_1$ and $Y_2$ occur together, concentrating probability mass along the diagonal. Under countermonotonic coupling, extreme values occur in opposition, spreading probability mass differently. These distinct dependence structures yield different probabilities for path-dependent events, even though the marginals remain identical. The argument extends to $h > 2$ via vine copula decompositions \cite{aas_pair-copula_2009}.
\end{proof}

\begin{proposition}[Quantiles to parametric: local identifiability]\label{prop:quantiles-to-parametric}
A finite set of quantiles constrains but does not uniquely determine a parametric distribution—infinitely many distributions can match the same quantile values. However, if we fix a parametric family with $d$ parameters, then $L \ge d$ well-chosen quantile levels can locally identify the parameters uniquely.

Formally: Let $\{F(\cdot \mid \theta): \theta \in \Theta \subset \mathbb{R}^d\}$ be an identifiable parametric family, and define the quantile mapping
\[
\Phi(\theta) = (Q_\theta(q_1),\ldots,Q_\theta(q_L))
\]
where $Q_\theta(q)$ is the $q$-quantile of $F(\cdot \mid \theta)$ and $\{q_\ell\}_{\ell=1}^L \subset (0,1)$. If the Jacobian $J(\theta) = \partial\Phi(\theta)/\partial\theta^\top$ has rank $d$ near $\theta_\star$ and $L \ge d$, then $\Phi$ is locally injective at $\theta_\star$—at most one parameter vector matches all $L$ quantiles.
\end{proposition}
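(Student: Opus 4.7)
The plan has two separate parts, matching the two claims in the statement. The informal non-identifiability claim is easy: given $L$ prescribed quantile values at levels $q_1<\cdots<q_L$ with values $v_1<\cdots<v_L$, any nondecreasing CDF that passes through the points $\{(v_\ell,q_\ell)\}_{\ell=1}^L$ satisfies all quantile constraints, and one can explicitly exhibit an infinite family (different monotone interpolations between knots, different tail decays, etc.). This shows that quantile matching alone leaves the distribution grossly under-determined outside the chosen grid.

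The substantive claim is local injectivity of $\Phi$ at $\theta_\star$, for which the natural tool is the inverse function theorem applied to a suitable $d$-dimensional restriction of $\Phi$. Since by hypothesis $J(\theta_\star)\in\mathbb{R}^{L\times d}$ has rank $d$, there exist indices $i_1<\cdots<i_d$ in $\{1,\ldots,L\}$ such that the corresponding $d\times d$ submatrix $\widetilde J(\theta_\star)$ is invertible. I would then define the restricted map $\widetilde\Phi(\theta)=(Q_\theta(q_{i_1}),\ldots,Q_\theta(q_{i_d}))$, whose Jacobian at $\theta_\star$ equals $\widetilde J(\theta_\star)$ and is therefore invertible. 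The inverse function theorem gives an open neighborhood $U\ni\theta_\star$ on which $\widetilde\Phi$ is a diffeomorphism onto its image, hence injective on $U$. Because $\widetilde\Phi$ is the coordinate projection of $\Phi$ onto the components indexed by $(i_1,\ldots,i_d)$, any two parameters $\theta_1,\theta_2\in U$ with $\Phi(\theta_1)=\Phi(\theta_2)$ must in particular agree under $\widetilde\Phi$, and injectivity of $\widetilde\Phi$ forces $\theta_1=\theta_2$.

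The main technical obstacle is establishing the $C^1$ regularity of the quantile map $\theta\mapsto Q_\theta(q)$ needed for the inverse function theorem. For families with a positive, differentiable density $f(\cdot\mid\theta)$, implicit differentiation of the identity $F(Q_\theta(q)\mid\theta)=q$ yields
\[
\frac{\partial Q_\theta(q)}{\partial\theta_j}=-\frac{\partial_{\theta_j}F(Q_\theta(q)\mid\theta)}{f(Q_\theta(q)\mid\theta)},
\]
which is well defined wherever the density is strictly positive at the quantile value. This expression also makes the rank hypothesis concretely checkable: the rank of $J(\theta_\star)$ is governed by the linear independence, across $\ell=1,\ldots,L$, of the score-like vectors $\bigl(\partial_{\theta_j}F(Q_{\theta_\star}(q_\ell)\mid\theta_\star)\bigr)_{j=1}^{d}$ reweighted by $1/f(Q_{\theta_\star}(q_\ell)\mid\theta_\star)$. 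I would state the $C^1$ and positive-density assumptions explicitly at the top of the proof rather than bundle them into ``identifiability'', and I would emphasize that the conclusion is genuinely \emph{local}: without additional global structure (e.g.\ monotonicity of $\Phi$ or convexity of its image) the rank condition does not upgrade to global injectivity, which is precisely why the statement qualifies the quantile levels as ``well-chosen''.
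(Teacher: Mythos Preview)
Your proposal is correct and follows essentially the same approach as the paper: both reduce local injectivity to the full-rank Jacobian hypothesis and invoke the inverse/implicit function theorem. Your argument is in fact more complete---selecting a $d\times d$ invertible submatrix, applying the inverse function theorem to the restricted map $\widetilde\Phi$, and deriving the $C^1$ regularity of $\theta\mapsto Q_\theta(q)$ via implicit differentiation of $F(Q_\theta(q)\mid\theta)=q$---whereas the paper simply cites the implicit function theorem in one line and then illustrates the rank condition with Gaussian, Student-$t$, and mixture examples.
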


\begin{proof}
A full-rank Jacobian $J(\theta_\star)$ implies local injectivity via the implicit function theorem \cite{lehmann_theory_1998,steven_g_krantz_implicit_2012}. If $L < d$, the system is underdetermined. If chosen levels make $J(\theta)$ rank-deficient (\emph{e.g.}, median-only for scale parameters), multiple parameter vectors can match the quantiles.

We can expand for some common distributions. For a Gaussian $N(\mu,\sigma^2)$ with $d=2$ unknown parameters, any two distinct levels $q_1 \neq q_2$ uniquely determine $(\mu,\sigma)$ via $Q(q_\ell) = \mu + \sigma z_{q_\ell}$ where $z_q$ is the standard normal quantile. For Student-$t$ with $(\mu,\sigma,\nu)$ and $d=3$, at least three well-spaced levels including tails are needed; median-centered levels convey little information about $\nu$. For finite mixtures, more than $d$ quantiles may be needed for numerical stability \cite{teicher_identifiability_1963}.
\end{proof}

\subsection{From trajectory ensembles (marginalization)}

Trajectory ensembles $\{\mathbf{y}^{(m)}_{T+1:T+h}\}_{m=1}^M$ provide the most flexible starting point. Conversions to simpler forecast types proceed via marginalization—computing per-step statistics from the ensemble without requiring additional assumptions. These conversions accurately recover marginal distributions asymptotically but irreversibly discard temporal dependence information.

The primary limitation is computational cost: storage, communication, and inference scale with ensemble size $M$. Extreme quantiles, often used for risk estimation (\Cref{sec:use_case_event}), require large $M$ for stable estimation, though not all tasks demand large ensembles and $M$ can be adjusted at inference time. This trade-off between expressiveness and computational efficiency is illustrated by the evolution from Chronos \cite{ansari_chronos_2024}, a trajectory ensemble model, to ChronosBolt \cite{ansari_chronosbolt_2024} and Chronos-2 \cite{ansari_chronos-2_2025}, which produce quantile forecasts. The authors report improvements in accuracy and much faster inference with their quantile models, demonstrating that simpler forecast types offer practical advantages when applications require only marginal uncertainty. However, this efficiency gain comes at the cost of flexibility: ChronosBolt and Chronos-2 need external modeling and assumptions for answering some operational questions, as discussed in Sections \ref{sec:usecases} and \ref{sec:convert}, whereas Chronos supports such tasks natively.

\subsubsection{To quantile forecasts}

At each step $k$, compute the empirical CDF from ensemble values $\{y^{(m)}_{T+k}\}_{m=1}^M$:
\[
\widehat{F}_{T+k}(y) = \frac{1}{M}\sum_{m=1}^M \mathbf{1}\{y^{(m)}_{T+k} \le y\}, \quad \widehat{Q}_{T+k}(q) = \inf\{y: \widehat{F}_{T+k}(y) \ge q\}.
\]
In practice, smooth quantile curves over $k$ and apply monotone rearrangement to ensure $\widehat{Q}_{T+k}(q_1) \le \widehat{Q}_{T+k}(q_2)$ for $q_1 < q_2$.

Information loss: Marginalization preserves per-step distributional information but discards the joint distribution structure. Path-dependent questions (first-passage, run lengths) cannot be answered from quantiles alone without reconstructing temporal dependence via additional assumptions. Extreme quantiles require large $M$ for stable estimation.

\subsubsection{To parametric forecasts}

Fit a parametric family to the empirical distribution at each step. For family $F(\cdot \mid \theta)$, estimate $\theta_{T+k}$ by maximum likelihood:
\[
\hat{\theta}_{T+k} = \arg\max_\theta \sum_{m=1}^M \log f(y^{(m)}_{T+k} \mid \theta)
\]
or use method-of-moments, kernel density estimation, or empirical CDFs directly.

Information loss: Like quantile conversion, parametric fitting discards temporal dependence. Additionally, the parametric fit may distort tail behavior if the chosen family is misspecified (\emph{e.g.}, Gaussian tails for heavy-tailed data).

\subsubsection{To point forecasts}

Summarize each step's ensemble with a point statistic: mean $\hat{y}_{T+k} = \frac{1}{M}\sum_m y^{(m)}_{T+k}$, median, \emph{etc}. Choose the summary aligned with the decision loss: mean for squared error, median for absolute error.

Information loss: Complete—both distributional uncertainty and temporal dependence are discarded. Point forecasts cannot support any probabilistic task without external uncertainty quantification.

\subsection{From parametric forecasts}

Given marginal CDFs $\{F_{T+k}\}_{k=1}^h$ parametrized by $\{\boldsymbol{\theta}_{T+k}\}$, conversions to simpler forms are straightforward, but generating trajectories requires specifying temporal dependence.

\subsubsection{To quantile forecasts}

Invert each marginal CDF: $Q_{T+k}(q) = F_{T+k}^{-1}(q)$. Many families have closed-form inverses (Gaussian: $\Phi^{-1}$; exponential: $-\log(1-q)/\lambda$). For others, use numerical root-finding on $F_{T+k}(x) = q$.

Caveats: Quantile quality depends on the parametric family. Misspecified tails yield biased extreme quantiles. No temporal dependence information is added—this is a lateral conversion between marginal representations.

\subsubsection{To point forecasts}

Extract the desired summary statistic from each marginal: mean $\mu_{T+k} = \mathbb{E}[Y_{T+k}]$, median $Q_{T+k}(0.5)$, or mode $\arg\max_y f_{T+k}(y)$. For mixtures, compute moments analytically or via numerical integration.

Information loss: Only the chosen statistic is retained; distributional shape, spread, and tails are discarded.

\subsubsection{To trajectory ensembles}
\label{sec:dens2paths}

Marginals $\{F_{T+k}\}$ do not determine a joint distribution. By Sklar's theorem \cite{sklar_random_1973,nelsen_introduction_2006}, any joint can be written as
\[
\Pr(Y_{T+1} \le y_1, \ldots, Y_{T+h} \le y_h) = C(F_1(y_1),\ldots,F_h(y_h))
\]
where $C$ is a copula on $(0,1)^h$. To sample trajectories, specify $C$, draw $U^{(m)} \sim C$, and transform: $y^{(m)}_{T+k} = F_{T+k}^{-1}(U^{(m)}_k)$.

\paragraph{Copula selection.} The choice of $C$ determines temporal dependence structure and path-event probabilities:

\begin{itemize}
\item \textbf{Parametric copulas.} Gaussian copula with correlation matrix $R$ captures linear persistence; Student-$t$ copula adds tail dependence, correcting underestimation of joint extremes that independence or Gaussian copulas produce. Estimate $R$ from historical data or domain knowledge.

\item \textbf{Vine copulas.} D-vines and C-vines decompose high-dimensional copulas into bivariate building blocks along lags \cite{aas_pair-copula_2009}, capturing asymmetric and nonlinear dependence.

\item \textbf{Empirical copulas.} Ensemble Copula Coupling (ECC) \cite{schefzik_uncertainty_2013}: sample independently from each $F_{T+k}$, then impose the rank structure of a reference ensemble to induce coherence. Variants include ECC-Q (quantile-based), ECC-R (random), ECC-T (time-based).
\end{itemize}

Critical assumption: The copula is not identified by marginals—different $C$ yield different path probabilities (Proposition~\ref{prop:nonident}). This conversion requires imposing temporal dependence structure from external sources. Independence copulas underestimate persistence and extremes. Tail-dependent copulas or vine structures mitigate this but require estimation from auxiliary data. ECC inherits biases from the reference ensemble. Time-varying dependence requires dynamic copulas or regime conditioning.

\subsection{From quantile forecasts}

Given quantile levels $\mathcal{Q} = \{q_\ell\}_{\ell=1}^L$ and values $\{Q_{T+k}(q_\ell)\}_{\ell=1}^L$ at each step, conversions require reconstruction of full marginal distributions.

\subsubsection{To parametric forecasts}
\label{sec:quant2dens}

A finite quantile grid constrains the CDF at $L$ points; infinitely many densities match these constraints. Uniqueness holds only under a fixed parametric family with sufficient informative levels (Proposition~\ref{prop:quantiles-to-parametric}). Three reconstruction methods \cite{arnold_first_2008,turnbull_empirical_1976,bahadur_note_1966,kiefer_bahadurs_1967}:

\paragraph{Moment matching.} For families with closed-form quantile-to-parameter maps (Gaussian, lognormal), invert directly. Example: Gaussian with two levels $q_1, q_2$ yields
\[
\mu = \frac{Q(q_2)z_{q_1} - Q(q_1)z_{q_2}}{z_{q_1} - z_{q_2}}, \quad \sigma = \frac{Q(q_2) - Q(q_1)}{z_{q_2} - z_{q_1}}
\]
where $z_q = \Phi^{-1}(q)$. For more complex families, use order statistics and maximum likelihood \cite{arnold_first_2008}.

\paragraph{Quantile regression.} Choose family $F(\cdot \mid \boldsymbol{\theta})$ and fit by minimizing quantile discrepancy \cite{klugman_loss_nodate,sgouropoulos_matching_2015,serfling_approximation_1980,a_w_van_der_vaart_asymptotic_1998}:
\[
\hat{\boldsymbol{\theta}} = \arg\min_{\boldsymbol{\theta}} \sum_{\ell=1}^L w_\ell (Q_{\boldsymbol{\theta}}(q_\ell) - Q_{T+k}(q_\ell))^2
\]
with weights $w_\ell \propto [q_\ell(1-q_\ell)]^{-1} f_{\boldsymbol{\theta}}(Q_{\boldsymbol{\theta}}(q_\ell))^2$. Iterate: initialize with equal weights, fit, update weights, refit.

\paragraph{Nonparametric interpolation with parametric tails.} Build a monotone interpolant $\widehat{Q}_{T+k}$ (piecewise linear or spline) from $\{(q_\ell, Q_{T+k}(q_\ell))\}$, invert to get $\widehat{F}_{T+k}$, then fit tail behavior beyond the quantile grid using a parametric family (\emph{e.g.}, GPD for upper tail). Fit by minimizing Cramér-von Mises distance \cite{hettmansperger_minimum_1994} or quantile distance \cite{devroye_non-uniform_2013}.

Caveats: Tail extrapolation requires parametric assumptions not contained in the quantile forecast. For risk tasks requiring extreme quantiles ($q < 0.01$ or $q > 0.99$) when training used $\{0.1,\ldots,0.9\}$, retraining is more reliable than extrapolation.

\subsubsection{To point forecasts}

If the median level $q=0.5$ is in the training set $\mathcal{Q}$, the median is directly available: $\hat{y}_{T+k} = Q_{T+k}(0.5)$; otherwise, interpolate from neighboring quantile levels. For the mean, integrate the quantile function via numerical quadrature. With trapezoidal rule:
\[
\widehat{\mathbb{E}}[Y_{T+k}] \approx \sum_{\ell=1}^{L-1} \frac{Q_{T+k}(q_\ell) + Q_{T+k}(q_{\ell+1})}{2}(q_{\ell+1} - q_\ell) + \text{tail corrections}.
\]
This uses the identity $\mathbb{E}[Y] = \int_0^1 Q(u) \, du$, approximated via the observed quantile grid.

Caveats: Coarse grids and unmodeled tails bias the mean estimate. Interpolation for the median is generally reliable, but mean estimation requires integrating over the full support including tails. All distributional information beyond the chosen statistic is lost.

\subsubsection{To trajectory ensembles}

Reconstruct marginals as in Section~\ref{sec:quant2dens}, then apply copula-based sampling as in Section~\ref{sec:dens2paths}.

Multiple assumptions required: Accuracy depends on both the marginal reconstruction quality and copula specification. Coarse quantile grids or misspecified tails propagate to path-level errors, particularly for first-passage and duration questions. Temporal dependence must be imposed from external sources.

\subsection{From point forecasts}
\label{sec:point_convert}

Point forecasts $\{y_{T+k}\}_{k=1}^h$ contain no native uncertainty information. Post-hoc methods add approximate uncertainty:

\paragraph{Split conformal prediction.} Partition data into training and calibration sets. On calibration data, compute residuals $r_i^{(k)} = |y_{t_i+k} - \hat{y}_{t_i+k}|$ for each lead $k$. For marginal intervals, set $q_k$ as the $(1-\alpha)$-quantile of $\{r_i^{(k)}\}$ and form $[\hat{y}_{T+k} - q_k, \hat{y}_{T+k} + q_k]$. For pathwise bands, use sup-norm scores $s_i = \max_k r_i^{(k)}/w_k$ where $w_k$ scales by lead-specific spread \cite{angelopoulos_gentle_2021,stankeviciute_conformal_2021}.

\paragraph{Residual bootstrap.} Estimate the residual distribution from historical forecast errors, stratified by time-of-day, day-of-week, or lead. Sample $e^{*(m)}_k$ from this distribution and form pseudo-trajectories $y^{(m)}_{T+k} = \hat{y}_{T+k} + e^{*(m)}_k$.

External assumptions required: Conformal methods guarantee coverage but not calibrated event probabilities—they answer "\textit{where will 95\% of realizations fall?}" not "\textit{what is $\Pr(\text{event})$?}". Bootstraps inherit the residual model and may miss regime changes or tail behavior. Both approaches impose uncertainty structure from external calibration data.

\subsection{Practical guidance}

\Cref{tab:conversion_summary} summarizes conversion methods, validity conditions, and key limitations.

\begin{table}
\centering
\small
\begin{tabular}{@{}lll@{}}
\toprule
\textbf{Conversion} & \textbf{Method} & \textbf{Notes} \\
\midrule
Trajectory → Quantiles & Empirical percentiles & §4.2.1: Direct; discards temporal dependence \\
Trajectory → Parametric & Fit family to empirical & §4.2.2: Direct; discards dependence; misspec. risk \\
Trajectory → Point & Mean/median & §4.2.3: Direct; discards all uncertainty \\
\midrule
Parametric → Quantiles & Invert CDF & §4.3.1: Direct (lateral); misspec. affects tails \\
Parametric → Point & Extract mean/median & §4.3.2: Direct; loses distribution \\
Parametric → Trajectory & Copula sampling & §4.3.3: Requires dependence assumptions \\
\midrule
Quantiles → Parametric & Moment/regression/interp. & §4.4.1: Requires distributional family; tail risk \\
Quantiles → Point & Median / integrate & §4.4.2: Approximation; grid coarseness \\
Quantiles → Trajectory & Reconstruct + copula & §4.4.3: Compounds multiple assumptions \\
\midrule
Point → Any & Conformal / bootstrap & §4.5: Requires external calibration data \\
\bottomrule
\end{tabular}
\vspace{0.75em}
\caption{Conversion methods and assumptions. Section references link to detailed procedures. Direct conversions compute target information from the source forecast without additional assumptions; upward conversions (toward trajectory ensembles) require structural assumptions (copulas, distributional families, calibration data) to reconstruct information not present in marginal forecasts.}
\label{tab:conversion_summary}
\end{table}

\paragraph{When to convert.} Marginalization (trajectory $\to$ simpler forms) is appropriate when the simpler form suffices for the intended task. If only marginal intervals are needed, converting trajectories to quantiles reduces storage and communication costs while preserving the relevant information. However, marginalization is irreversible—if path-dependent questions later arise, trajectories must be regenerated, which may not be possible without the original model.

\paragraph{When not to convert.} Conversions toward trajectory ensembles from marginal forecast types require imposing temporal dependence structure that is not identified by the source forecast. If dependence structure is unknown or dynamic, copula-based sampling can produce misleading path probabilities. In such cases, either (i) use a TSFM that natively outputs trajectories, or (ii) validate dependence assumptions against held-out data using path-aware metrics (Section~\ref{sec:evaluation}).

\paragraph{Validation requirements.} After any upward conversion, assess whether reconstructed outputs support the intended applications. For marginal-to-joint conversions, check path-dependent metrics (Energy Score, Variogram Score; Section~\ref{sec:evaluation}) on validation data. For quantile-to-parametric conversions, verify tail behavior if the application depends on extreme events. For point-to-probabilistic conversions, test calibration (PIT histograms, reliability diagrams) not just coverage.

\section{Evaluation}
\label{sec:evaluation}

Forecast evaluation must align with the applications forecasts are intended to support. For example, a trajectory ensemble with accurate marginals but poor temporal dependence will perform well on marginal metrics yet fail on path-dependent tasks like threshold crossing. Conversely, evaluating point forecasts using probabilistic metrics is meaningless without native uncertainty quantification. These examples illustrate a fundamental principle: the right metric depends on both the forecast type and the intended operational task.

We organize evaluation metrics by forecast type and operational task, emphasizing practical guidance for practitioners. \Cref{tab:metric_task_map} provides the complete mapping. The following subsections define each metric family, explain what they measure, and clarify when each is appropriate.

\begin{table}
\centering
\small
\begin{tabular}{@{}llll@{}}
\toprule
\textbf{Metric family} & \textbf{Forecast type} & \textbf{Tasks} & \textbf{Key metrics} \\
\midrule
Point accuracy & Point & None (baseline only) & MAE, MSE, MASE \\
Marginal calibration & Parametric, Quantiles & Pointwise intervals & Pinball, CRPS, WIS \\
Path dependence & Trajectory & Path-dependent tasks & Energy, Variogram \\
Event probabilities & Parametric, Trajectory & Events and crossing & Brier, IBS \\
Calibration diagnostics & Parametric, Quantiles, Trajectory & All probabilistic tasks & PIT, reliability, coverage \\
\bottomrule
\end{tabular}
\vspace{0.75em}
\caption{Metric families, applicable forecast types, and operational tasks from \Cref{sec:usecases}. Path-dependent tasks include pathwise bands, event probabilities, threshold crossing, and scenario generation.}
\label{tab:metric_task_map}
\end{table}

\subsection{Point forecast metrics}
Point forecasts are evaluated using deterministic error-based metrics including Mean Absolute Error (MAE), Mean Squared Error (MSE), and Mean Absolute Scaled Error (MASE) \cite{hyndman_another_2006}. These metrics measure central tendency accuracy—how close predicted values are to realized values—but convey no information about forecast uncertainty or the distribution of potential outcomes. While useful as baseline comparisons, they cannot evaluate any of the six operational tasks from \Cref{sec:usecases} because all of those tasks require explicit uncertainty quantification to assess risk, compute probabilities, or construct intervals.

Mathematically, for $h$ point forecast-realization pairs $(\hat{y}_t, y_t)$:
$$
\text{MAE} = \frac{1}{h}\sum_{t=1}^h |\hat{y}_t - y_t|, \quad
\text{MSE} = \frac{1}{h}\sum_{t=1}^h (\hat{y}_t - y_t)^2, \quad
\text{MASE} = \frac{\text{MAE}}{\frac{1}{n-1}\sum_{t=2}^n |y_t - y_{t-1}|}
$$
where MASE scales MAE by the in-sample one-step naive MAE computed on the training series of length $n$ \cite{hyndman_another_2006}.

\subsection{Marginal distributional metrics}

Proper scoring rules assess the quality of probabilistic forecasts by penalizing both bias and miscalibration \cite{gneiting_strictly_2007}. For quantile forecasts, \textbf{Pinball loss} measures the accuracy of predicted quantile levels:
\[
\rho_q(\hat{Q}_t, y_t) = (q - \mathbf{1}\{y_t < \hat{Q}_t(q)\})(\hat{Q}_t(q) - y_t)
\]
where $\hat{Q}_t(q)$ is the predicted $q$-quantile for observation $t$. The \textbf{Weighted Interval Score (WIS)} averages pinball losses across multiple quantile levels to provide an aggregate measure of interval quality \cite{bracher_evaluating_2021}. The \textbf{Continuous Ranked Probability Score (CRPS)} \citep{gneiting_strictly_2007} generalizes MAE to full predictive distributions:
$$
\text{CRPS}(\hat{F}_t, y_t) = \int_{-\infty}^\infty (\hat{F}_t(x) - \mathbf{1}\{y_t \le x\})^2 \, dx
$$
where $\hat{F}_t$ is the predictive CDF for observation $t$. It rewards both accuracy and sharpness, admits closed-form expressions for many common parametric families, and directly evaluates pointwise prediction intervals. \textbf{Log-likelihood} \citep{gneiting_strictly_2007} is another proper scoring rule but exhibits high sensitivity to tail misspecification, making it less robust for operational evaluation.

A critical limitation unites all marginal metrics: they evaluate per-step accuracy in isolation while completely ignoring temporal dependence structure. A forecast with perfect marginal calibration at each time step but incorrect temporal correlation will achieve excellent marginal scores yet fail on path-dependent tasks. Marginal metrics cannot distinguish trajectories with independent steps from those with strong autocorrelation, rendering them insufficient for pathwise bands, event probabilities, threshold crossing, and scenario generation. Per-step marginal scores cannot be aggregated to evaluate joint functionals without explicitly modeling temporal dependence.

\subsection{Multivariate and path-dependent metrics}

Path-dependent tasks require metrics that evaluate joint distributions over the forecast horizon, not just marginal accuracy at individual time steps. The \textbf{Energy Score} extends CRPS to multi-time (or multivariate) settings:
$$
\text{ES}\left(\hat{\mathbf{y}}^{(1)}, \ldots, \hat{\mathbf{y}}^{(M)}, \mathbf{y}\right) = \frac{1}{M}\sum_{m=1}^M \|\hat{\mathbf{y}}^{(m)} - \mathbf{y}\| - \frac{1}{2M^2}\sum_{m=1}^M\sum_{m'=1}^M \|\hat{\mathbf{y}}^{(m)} - \hat{\mathbf{y}}^{(m')}\|
$$
where $\|\cdot\|$ is the Euclidean norm, $\hat{\mathbf{y}}^{(m)}$ are forecast trajectories, and $\mathbf{y}$ is the realized path. It simultaneously penalizes both marginal forecast errors and incorrect temporal dependence structure \cite{gneiting_strictly_2007}, providing a proper scoring rule for trajectory ensembles that captures whether forecasted paths exhibit realistic autocorrelation, persistence, and cross-time relationships. The \textbf{Variogram Score} offers a related approach:
$$
\text{VS}(\hat{\mathbf{y}}^{(1)}, \ldots, \hat{\mathbf{y}}^{(M)}, \mathbf{y}) = \sum_{t<t'} w_{t,t'}\left[\left(|y_t - y_{t'}| - \frac{1}{M}\sum_{m=1}^M |\hat{y}^{(m)}_t - \hat{y}^{(m)}_{t'}|\right)^2\right]
$$
where $y_t$ denotes the $t$-th time step of the realized path $\mathbf{y}$, $\hat{y}^{(m)}_t$ denotes the $t$-th time step of forecast trajectory $m$, and weights $w_{t,t'}$ control which temporal lags receive emphasis \cite{scheuerer_variogram-based_2015}, enabling practitioners to prioritize short-range versus long-range dependence based on application requirements.

Both metrics are essential for validating copula-based conversions (\Cref{sec:dens2paths}) and for evaluating any forecast used in path-dependent tasks (pathwise bands, event probabilities, threshold crossing, and scenario generation). The diagnostic pattern is revealing: if a trajectory ensemble achieves low CRPS (good marginal calibration) but high Energy Score (poor dependence), then the marginal distributions are accurate but the temporal correlation structure is misspecified. Such a forecast is acceptable for pointwise prediction intervals where only marginals matter, but problematic for threshold crossing where persistence and run lengths depend critically on temporal dependence.

\subsection{Event-based and calibration metrics}

Event-based tasks (event probabilities and threshold crossing from \Cref{sec:use_case_event,sec:use_case_threshold}) require computing probabilities of specific events from TSFM forecasts. The methods for computing event probabilities differ by forecast type, as detailed in \Cref{sec:use_case_event,sec:use_case_threshold}: point forecasts cannot produce calibrated probabilities; parametric and quantile forecasts compute via marginal CDFs (requiring copulas for path-dependent events); trajectory ensembles use direct Monte Carlo estimation.

Once event probabilities are computed, the \textbf{Brier Score} evaluates their accuracy. For a validation set of $n$ forecast instances indexed by $i$, let $\hat{p}_i$ denote the predicted probability and $o_i \in \{0,1\}$ the observed binary outcome:
$$
\text{BS} = \frac{1}{n}\sum_{i=1}^n (\hat{p}_i - o_i)^2
$$

For threshold crossing, the \textbf{Integrated Brier Score (IBS)} evaluates time-to-event survival functions, where computation methods are detailed in \Cref{sec:use_case_threshold,app:survival}. For each validation instance $i$ with forecast horizon $h$, let $\hat{S}_i(k)$ denote the predicted probability that the hitting time exceeds $k$ steps ahead, and let $\tau_i$ denote the observed hitting time:
$$
\text{IBS} = \frac{1}{nh}\sum_{i=1}^n\sum_{k=1}^h (\hat{S}_i(k) - \mathbf{1}\{\tau_i > k\})^2
$$
Both metrics measure whether predicted event probabilities accurately reflect empirical frequencies—the operational requirement for event probabilities and threshold crossing.

Beyond evaluating individual forecasts, calibration diagnostics assess the fundamental reliability of a forecasting system. The \textbf{Probability Integral Transform (PIT)} evaluates marginal calibration. For validation instance $i$ with forecast origin at time $T_i$ and horizon step $k$:
\[
\text{PIT}_{i,k} = \hat{F}_{T_i+k}(y_{T_i+k})
\]
where $\hat{F}_{T_i+k}$ is the predictive CDF for time $T_i+k$ and $y_{T_i+k}$ is the realized value. Under perfect marginal calibration, $\{\text{PIT}_{i,k}\}_{i=1}^n$ should follow a uniform distribution for each fixed $k$ \cite{gneiting_probabilistic_2007}. Deviations reveal systematic biases: U-shaped PIT histograms indicate overconfidence (intervals too narrow), while inverse-U shapes indicate underconfidence (intervals too wide). \textbf{Reliability diagrams} provide complementary insight for event probabilities by binning predicted probabilities and comparing them to empirical frequencies, directly visualizing whether a forecast that claims ``30\% probability" actually realizes the event roughly 30\% of the time.

A critical distinction emerges when evaluating uncertainty quantification methods. Conformal prediction methods (\Cref{sec:convert}) provide finite-sample guarantees for \emph{coverage}—the fraction of realizations falling within prediction intervals—but do not guarantee \emph{calibration} of probability estimates. An interval construction may achieve exactly 95\% coverage while simultaneously assigning severely miscalibrated probabilities to tail events. For event-based tasks (event probabilities and threshold crossing), where operational choices depend on predicted probabilities rather than interval containment, calibration is necessary; coverage alone is insufficient. Among forecasts that achieve adequate calibration, practitioners should prefer sharper (more concentrated) predictions that provide greater operational value. However, optimizing for sharpness without first ensuring calibration produces overconfident forecasts that systematically mislead practitioners.

\subsection{Practical metric selection}

The preceding subsections establish a taxonomy of evaluation metrics organized by what they measure. Practitioners must select metrics aligned with their operational tasks. We provide concrete guidance for the six tasks from \Cref{sec:usecases}:

\begin{itemize}
\item \textbf{Pointwise intervals and window aggregates:} CRPS or pinball loss for distributional accuracy, supplemented with PIT histograms to verify marginal calibration. These tasks require only marginal distributions, making per-step metrics sufficient.

\item \textbf{Pathwise bands and scenario generation:} Energy Score or Variogram Score to evaluate joint distributions and temporal dependence. If formal coverage guarantees are needed for pathwise bands, apply split conformal calibration and report simultaneous coverage rates.

\item \textbf{Event probabilities and threshold crossing:} Brier Score for binary event probabilities, IBS for time-to-event distributions, and reliability diagrams to verify probability calibration. For multi-step events spanning multiple time points, add Energy Score to validate temporal dependence structure.
\end{itemize}

No single metric captures all dimensions of forecast quality—accuracy, calibration, sharpness, and temporal dependence interact in complex ways. Comprehensive evaluation requires reporting multiple metrics that target different aspects of performance. For example, a probabilistic forecast evaluation could report: (1) one marginal distributional metric (CRPS or pinball loss) quantifying per-step accuracy, (2) one calibration diagnostic (PIT histogram or empirical coverage rate) verifying reliability, and (3) if trajectory ensembles are available, one path-dependent metric (Energy Score or Variogram Score) assessing joint calibration. This minimal suite reveals critical trade-offs: a model might sacrifice marginal sharpness to achieve better joint calibration, a trade-off that benefits path-dependent tasks but not pointwise interval construction.

\section{Conclusion}

Time-series foundation models have achieved remarkable accuracy, yet accuracy alone does not determine practical value. The form of a forecast—point, quantile, parametric, or trajectory ensemble—fundamentally constrains which operational questions it can answer. Our analysis establishes when different forecast types suffice for specific operational tasks and when conversions between forecast types preserve or lose critical information.

Three core results emerge. First, trajectory ensembles are strictly most expressive, enabling direct conversion to all other forms through marginalization without additional assumptions. Conversely, reconstructing trajectory ensembles from marginals requires imposing temporal dependence structure via copulas or other external modeling. This asymmetry has practical implications: TSFMs that produce trajectory ensembles maximize application flexibility, while per-step forecast types lock users into marginal-only questions unless they invest in validated dependence modeling.

Second, many operational problems are inherently path-dependent. First-passage problems, threshold persistence, aggregate risk measures, and scenario generation all require joint predictive distributions over the forecast horizon. Our survey reveals a mismatch: the minority of published TSFMs natively produce trajectory ensembles while the majority produce point forecasts, yet path-dependent questions pervade operational forecasting across domains—from predictive maintenance to financial risk management to climate impact assessment.

Third, conversions between forecast types are not universally valid. Our impossibility results (Propositions 1-3) formalize when marginals cannot determine path-event probabilities without specifying temporal dependence structure. Practitioners who convert marginal forecast types to trajectory ensembles via independence assumptions or ad-hoc copulas may obtain forecasts that appear reasonable on marginal metrics (CRPS) yet fail catastrophically on path-aware metrics (Energy Score) and real applications.

For TSFM developers, trajectory ensembles maximize downstream flexibility when computational resources permit. When efficiency demands simpler forecast types, quantile forecasts serve interval-based use cases while parametric forecasts enable analytical shortcuts for specific distributional families. For practitioners selecting TSFMs, our task-forecast mapping (\Cref{tab:use_cases}) provides concrete guidance. Evaluation must align with intended use, as detailed in \Cref{sec:evaluation}.

The question ``\textit{which TSFM is best?}" is incomplete without specifying ``\textit{best for what application?}" A point forecaster with state-of-the-art MAE cannot answer probability questions. A marginal forecaster with excellent CRPS cannot reliably estimate first-passage times without validated dependence modeling. A trajectory ensemble with poor calibration provides misleading risk estimates despite outputting the richest form. Progress in time-series foundation models must move beyond accuracy improvements toward forecast types and evaluation metrics that serve operational needs. When multiple models achieve similar marginal performance, forecast type becomes the primary differentiator of value. The ultimate measure of a TSFM is not its benchmark score but its ability to inform the applications it was built to serve.

%
%
\bibliographystyle{plainnat}
\bibliography{references}

\newpage
\appendix

\section{Survey of TSFM Forecast Types}
\label{app:survey}

We surveyed over 50 published TSFMs to understand the distribution of forecast types in current practice. \Cref{tab:tsfm_survey} classifies each model by forecast type as originally published. We acknowledge that many architectures could be extended to produce richer forecast types; our classification reflects the authors' proposed implementation. The temporal boundary of TSFMs is imprecise—the term gained prominence around 2022 following advances in large language models—but earlier work (DeepAR, MQ-RNN) pioneered similar foundation model capabilities for time series.

We screened TSFM papers from 2017--October 2025 across arXiv and major ML venues (NeurIPS, ICML, ICLR, KDD, AAAI, \emph{etc.}), retaining models with general-purpose time-series forecasting capability. Each paper is assigned to its \emph{richest native output} (point, quantile, parametric per-step marginals, or trajectory ensemble). Multi-output models are counted once under the richest native output. Dates in Table~\ref{tab:tsfm_survey} reflect the earliest arXiv revision at collection time.

\Cref{fig:papers_by_output_type} shows the evolution of forecast types over time. Notable trends include: (1) persistent dominance of point forecasts across all years, (2) recent growth in trajectory sampling methods (Chronos and variants), (3) emergence of repurposed language models, which predominantly produce point forecasts, and (4) relatively sparse development of quantile-based methods.

\begin{table}[!h]
\centering
\scriptsize
\setlength{\tabcolsep}{3pt}
\renewcommand{\arraystretch}{1.1}

\begin{tabular}{@{}%
  L{1.35cm}L{2.6cm}L{1.4cm}L{1.1cm} @{\hspace{8pt}}
  L{1.35cm}L{2.6cm}L{1.4cm}L{1.1cm}@{}}
\toprule
\textbf{DOI/arXiv} & \textbf{Algorithm Name} & \textbf{Architecture} & \textbf{Type} &
\textbf{DOI/arXiv} & \textbf{Algorithm Name} & \textbf{Architecture} & \textbf{Type} \\
\cmidrule(r){1-4}\cmidrule(l){5-8}

1711.11053 & MQ-RNN, MQ-CNN \cite{wen_multi-horizon_2018} & Non-trans. & Quantiles &
2402.07570 & GTT \cite{feng_only_2024} & Transformer & Point \\
1704.04110 & DeepAR \cite{salinas_deepar_2020} & Non-trans. & Parametric &
2403.00131 & UniTS \cite{gao_units_2024} & Transformer & Point \\
2106.10370 & MLE(ZNBP) \cite{awasthi_benefits_2021} & Non-trans. & Parametric &
2403.01742 & DIFFUSION-TS \cite{yuan_diffusion-ts_2024} & Transformer & Trajectory \\
2107.03502 & CSDI \cite{tashiro_csdi_2021} & Transformer & Trajectory &
2403.05713 & tsGT \cite{kucinski_tsgt_2024} & Transformer & Trajectory \\
2205.13504 & LTSF-Linear \cite{zeng_are_2022} & Non-trans. & Point &
2403.07815 & Chronos \cite{ansari_chronos_2024} & Transformer & Trajectory \\
2205.13158 & SwinVRNN \cite{hu_swinvrnn_2023} & Transformer & Trajectory &
2407.07874 & Toto \cite{cohen_toto_2024} & Transformer & Parametric \\
2210.02186 & TimesNet \cite{wu_timesnet_2023} & Non-trans. & Point &
2409.02322 & TimeDiT \cite{cao_timedit_2025} & Transformer & Trajectory \\
2211.14730 & PatchTST \cite{nie_time_2022} & Transformer & Point &
2409.15367 & Wass.+Chronos \cite{chernov_fine-tuning_2024} & Transformer & Trajectory \\
2302.11939 & FPT \cite{zhou_one_2023} & Repurp. LM & Point &
2409.16040 & Time-MoE \cite{shi_time-moe_2025} & Transformer & Point \\
2304.08424 & TiDE \cite{das_long-term_2023} & Non-trans. & Point &
2410.03806 & MetaTST \cite{dong_metadata_2024} & Transformer & Point \\
2306.09364 & TSMixer \cite{ekambaram_tsmixer_2023} & Non-trans. & Point &
2410.04803 & Timer-XL \cite{liu_timer-xl_2025} & Transformer & Point \\
2308.08241 & TEST \cite{sun_test_2023} & Repurp. LM & Point &
2410.09385 & Mamba4Cast \cite{bhethanabhotla_mamba4cast_2024} & Non-trans. & Point \\
2308.08469 & LLM4TS \cite{chang_llm4ts_2023} & Repurp. LM & Point &
2410.10469 & Moirai-MoE \cite{liu_moirai-moe_2024} & Transformer & Parametric \\
2310.10688 & TimesFM \cite{das_decoder-only_2023} & Transformer & Point &
2410.22269 & Fourier+Chronos \cite{gillman_fourier_2024} & Transformer & Trajectory \\
2310.10688 & TimesFM-Q \cite{das_timesfm_2023} & Transformer & Quantiles &
2410.24087 & TimesFM-ICF \cite{das_-context_2024} & Transformer & Point \\
2310.01327 & TACTiS-2 \cite{ashok_tactis-2_2023} & Transformer & Parametric &
— & ChronosBolt \cite{ansari_chronosbolt_2024} & Transformer & Quantiles \\
2310.01728 & Time-LLM \cite{jin_time-llm_2024} & Repurp. LM & Point &
2411.02941 & TSMamba \cite{ma_mamba_2024} & Non-trans. & Point \\
2310.03589 & TimeGPT \cite{garza_timegpt-1_2023} & Transformer & Point &
2411.08249 & RAF \cite{tire_retrieval_2024} & Transformer & Trajectory \\
2310.04948 & TEMPO \cite{cao_tempo_2023} & Repurp. LM & Point &
2501.02945 & TabPFN-T \cite{hoo_tables_2025} & Non-trans. & Point \\
2310.06625 & iTransformer \cite{liu_itransformer_2023} & Transformer & Point &
2502.00816 & Sundial \cite{liu_sundial_2025} & Transformer & Trajectory \\
2310.07820 & LLMTime \cite{gruver_large_2023} & Repurp. LM & Trajectory &
2503.04118 & TimeFound \cite{xiao_timefound_2025} & Transformer & Point \\
2310.08278 & Lag-Llama \cite{rasul_lag-llama_2024} & Transformer & Parametric &
2505.13033 & TSPulse \cite{ekambaram_tspulse_2025} & Non-trans. & Point \\
2310.09751 & UniTime \cite{liu_unitime_2024} & Transformer & Point &
2505.23719 & TiRex \cite{auer_tirex_2025} & Non-trans. & Quantiles \\
2311.01933 & ForecastPFN \cite{dooley_forecastpfn_2023} & Transformer & Point &
2506.03128 & COSMIC \cite{auer_zero-shot_2025} & Transformer & Quantiles \\
2401.03955 & TTM \cite{ekambaram_tiny_2024} & Non-trans. & Point &
2508.19609 & FinCast \cite{zhu_fincast_2025} & Transformer & Quantiles \\
2402.02592 & Moirai \cite{woo_unified_2024} & Transformer & Parametric &
2510.15821 & Chronos-2 \cite{ansari_chronos-2_2025} & Transformer & Quantiles \\
2402.03885 & MOMENT \cite{goswami_moment_2024} & Transformer & Point & & & & \\
\bottomrule
\end{tabular}

\caption{Survey of TSFM literature classified by forecast type. Forecast types: Point, Parametric, Quantiles, Trajectory. Architecture: Transformer, Non-transformer, Repurposed LM (language model).}
\label{tab:tsfm_survey}
\end{table}

\begin{figure}[!h]
  \centering
  \includegraphics[width=.6\textwidth]{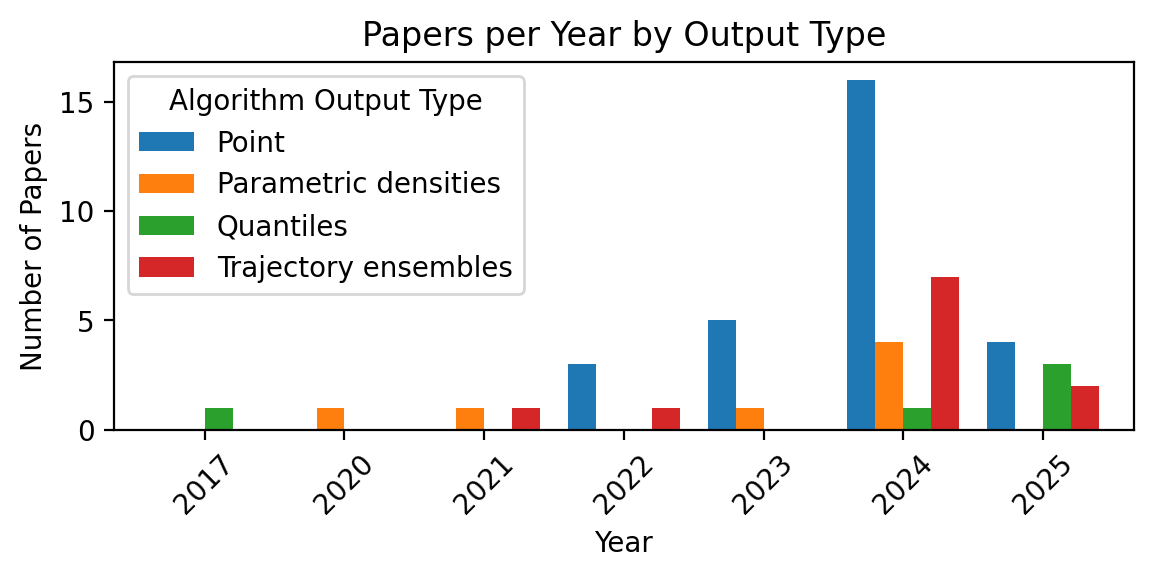}
  \caption{TSFM publications by forecast type over time. Point forecasts dominate across all years; trajectory ensembles show recent growth but remain a minority.}
  \label{fig:papers_by_output_type}
\end{figure}

\section{Extended Mathematical Derivations}
\label{app:b}

This appendix provides detailed formulations for selected operational tasks from \Cref{sec:usecases}.

\subsection{Value-at-Risk: Complete Formulation}
\label{app:var}

This section expands on the brief VaR discussion in \Cref{sec:use_case_event}.

Let $y_t$ denote the portfolio (log-)return at time $t$. For a horizon of $h$ trading days (\emph{e.g.}, $h=5$ for one week), define the aggregated return:
\[
R_{T+1:T+h} := \sum_{k=1}^{h} y_{T+k}
\]
and loss:
\[
L_{T+1:T+h} := -R_{T+1:T+h}.
\]

The $(100\times\alpha)\%$ Value-at-Risk over horizon $h$ is the $(1-\alpha)$-quantile of the predictive loss distribution:
\[
\mathrm{VaR}_{\alpha,h}(\mathbf{y}_{1:T}) := \inf\left\{x: \Pr\big(L_{T+1:T+h} \le x \mid \mathbf{y}_{1:T}\big) \ge 1-\alpha\right\}.
\]

Equivalently, $\Pr\big(R_{T+1:T+h} < -\mathrm{VaR}_{\alpha,h} \mid \mathbf{y}_{1:T}\big) = \alpha$. If one instead fixes a threshold (\emph{e.g.}, a 5\% weekly loss), the tail event probability is $\Pr\big(L_{T+1:T+h} > 0.05 \mid \mathbf{y}_{1:T}\big)$. VaR is the inverse of this tail probability as a function of $\alpha$.

\paragraph{Computation by forecast type.}
\begin{itemize}
  \item \textbf{Point forecasts:} Not suitable (no native uncertainty quantification).
  \item \textbf{Parametric forecasts:} Compute $\mathrm{VaR}_{\alpha,h}$ via the inverse CDF of the loss distribution implied by $\{\boldsymbol{\theta}_{T+k}\}_{k=1}^h$. The quality depends critically on: (1) correct specification of marginal distributions, particularly tail behavior, and (2) the assumed temporal dependence structure (copula or independence). Misspecified tails or dependence severely bias tail risk estimates.
  \item \textbf{Quantiles:} If the model produces quantiles at level $\alpha$, read off $\mathrm{VaR}_{\alpha,h}$ directly from the predicted loss quantile. For levels $\alpha \notin \mathcal{Q}$, interpolate within the training grid. However, TSFMs typically train on $\mathcal{Q} = \{0.1, \ldots, 0.9\}$; reliable prediction of extreme quantiles outside this range (\emph{e.g.}, $\alpha=0.01$ for 99\% VaR) generally requires retraining with tail-focused quantile levels.
  \item \textbf{Trajectory ensembles:} Estimate $\mathrm{VaR}_{\alpha,h}$ as the empirical $\alpha$-quantile of $\{L^{(m)}_{T+1:T+h}\}_{m=1}^M$ where $L^{(m)} = -\sum_{k=1}^h y^{(m)}_{T+k}$. For tail event probabilities, estimate $\Pr(L > \ell)$ by the fraction of simulated paths exceeding threshold $\ell$. Extreme quantiles require large sample sizes $M$ for stable estimation.
\end{itemize}

See \cite{jorion_value_2006,mcneil_quantitative_2015,diebold_comparing_1995} for further background on VaR estimation and evaluation.

\subsection{Survival Analysis for Threshold Crossing}
\label{app:survival}

This section expands on the brief threshold crossing discussion in \Cref{sec:use_case_threshold}.

Given threshold $C \in \mathbb{R}$, horizon $h$, and crossing direction $\triangleright \in \{\ge, \le\}$, define the first hitting time:
\[
\tau = \inf\{k \in \{1,\dots,h\}: y_{T+k} \triangleright C\}
\]
with right-censoring if no crossing occurs by $h$: $\tau > h$.

The goal is the predictive distribution $p(\tau \mid \mathbf{y}_{1:T})$, characterized by:

\paragraph{Survival function:}
\[
S(k) = \Pr(\tau > k \mid \mathbf{y}_{1:T})
\]

\paragraph{Hazard function:}
\[
h_k = \Pr(\tau = k \mid \tau \ge k, \mathbf{y}_{1:T})
\]

These satisfy the relationship:
\[
S(k) = \prod_{j=1}^{k}(1-h_j) \quad \text{and} \quad \Pr(\tau=k) = S(k-1)h_k.
\]

\paragraph{Computation from marginal CDFs.}
Let $F_{T+k}(u) = \Pr(y_{T+k} \le u \mid \mathbf{y}_{1:T})$ denote the predictive CDF at step $k$. The per-step crossing probability is:
\[
p_k := \Pr(y_{T+k} \triangleright C) = \begin{cases} 
1-F_{T+k}(C) & \text{if } \triangleright = \ge\\[2pt] 
F_{T+k}(C) & \text{if } \triangleright = \le
\end{cases}
\]

Under the independence approximation (crossing at step $k$ is independent of crossing at other steps):
\[
S(k) = \prod_{j=1}^{k}(1-p_j), \quad \Pr(\tau=k) = S(k-1)p_k, \quad h_k = p_k.
\]

This assumes a Markov-like structure where crossing probability at step $k$ depends only on the marginal distribution at that step, not on the full path history. This is often unrealistic for persistent processes—autocorrelation causes runs of high/low values, making actual crossing probabilities path-dependent. The independence approximation typically underestimates persistence and run lengths.

\paragraph{Answering specific example questions from \Cref{sec:use_case_threshold}.}

\textit{Vibration safety (probability of exceeding threshold over time):} Set $C$ to the vibration safety limit and $\triangleright = \ge$. The requested "probability of crossing as a function of time" is the cumulative distribution function:
\[
F_\tau(k) = \Pr(\tau \le k) = 1-S(k).
\]
An ensemble forecast yields the empirical $F_\tau(k)$ directly.

\textit{Air pollution persistence (staying above limit for $>3$ consecutive days):} Compute:
\[
\Pr\!\big(y_{T+1}\ge C,\dots,y_{T+4}\ge C\big).
\]
With trajectory ensemble: $\tfrac{1}{M}\sum_m \mathbf{1}\{\min_{j=1}^4 y^{(m)}_{T+j}\ge C\}$. Using the independence approximation: $\prod_{j=1}^4 p_j$ where $p_j=\Pr(y_{T+j}\ge C)$.

For further background on discrete-time survival modeling, see \cite{suresh_survival_2022}. On first-passage processes, see \cite{redner_guide_2001}. Additional survival analysis methods are discussed in \cite{fu_survival_2023}.

\subsection{Insurance Loss Scenario Ranking}
\label{app:insurance}

This section expands on the brief insurance scenario ranking discussion in \Cref{sec:use_case_scenarios}.

Let $y_t$ be a hazard intensity index (\emph{e.g.}, daily maximum wind speed, rainfall, temperature anomaly). Let $g: \mathbb{R}^h \to \mathbb{R}_+$ be a loss function mapping weather trajectories to financial losses. The loss over horizon $h$ is:
\[
L = g(\mathbf{y}_{T+1:T+h}) \ge 0.
\]

The goal is to generate scenarios, compute losses, and rank by probability and severity.

\paragraph{Procedure:}
\begin{enumerate}
\item \textbf{Sample trajectories:} Draw $\{\mathbf{y}^{(m)}_{T+1:T+h}\}_{m=1}^M \sim p(\mathbf{y}_{T+1:T+h}\mid \mathbf{y}_{1:T})$.

\item \textbf{Map to losses:} Compute $L^{(m)}=g(\mathbf{y}^{(m)})$ for each trajectory $m$.

\item \textbf{Empirical loss distribution:} 
\[
\widehat{F}_L(\ell)=\tfrac{1}{M}\sum_{m=1}^M \mathbf{1}\{L^{(m)}\le \ell\}.
\]

\item \textbf{Scenario probabilities:} 
\begin{itemize}
\item For i.i.d. draws: assign uniform weights $w_m=\tfrac{1}{M}$.
\item For structured scenarios: cluster paths using k-medoids or hierarchical clustering on features like peak intensity, duration, and spatial footprint. Set cluster weights $w_k=\frac{\#\text{paths in cluster }k}{M}$.
\end{itemize}

\item \textbf{Severity metrics:}
\begin{itemize}
\item Per scenario: severity $s_m=L^{(m)}$.
\item Per cluster: use mean loss $\bar{L}_k$ or a tail quantile (\emph{e.g.}, 95th percentile loss within cluster).
\end{itemize}

\item \textbf{Ranking:}
\begin{itemize}
\item By probability-weighted severity: rank scenarios by $r_m = w_m \cdot s_m$.
\item By exceedance probability: produce the loss exceedance curve:
\[
\widehat{\mathrm{EP}}(x) = 1-\widehat{F}_L(x) = \Pr(L > x).
\]
\item By conditional severity: rank clusters by $\bar{L}_k$ or tail quantile, then weight by $w_k$.
\end{itemize}
\end{enumerate}

This framework supports reserve setting (capital needed to cover $\alpha$-quantile loss), premium pricing (expected loss plus risk margin), and stress testing (identify high-severity low-probability scenarios).

For theoretical foundations of scenario-based planning, see \cite{murphy_value_1977,arrow_optimal_1951}.

\end{document}